\newcommand*{\addFileDependency}[1]{
\typeout{(#1)}
%
%
\@addtofilelist{#1}
%
\IfFileExists{#1}{}{\typeout{No file #1.}}
}\makeatother
\def\ROWCOLOR{black!15!white}
\definecolor{BLUE}{rgb}{0.3,0.3,0.9}
\definecolor{RED}{rgb}{0.8,0.05,0.05}
\definecolor{GREEN}{rgb}{0.05,0.5,0.05}
\setlist[enumerate]{leftmargin=.5in}
\setlist[itemize]{leftmargin=.5in}
\newcommand{\eg}{\textit{e.g.}\ }
\newcommand{\ie}{\textit{i.e.}\ }
\newcommand{\nb}{\textit{n.b.}\ }
\DeclareMathOperator*{\argmin}{argmin}
\newcommand{\pp}[2]{ \frac{\partial #1}{\partial #2} }
\newcommand{\dd}[2]{ \frac{\mathrm{d} #1}{\mathrm{d} #2} }
\newcommand{\itemsymbol}{{\small $\blacktriangleright$}}
\newcommand{\sA}{{\cal A}} 
\newcommand{\sB}{{\cal B}} 
\newcommand{\sC}{{\cal C}} 
\newcommand{\sD}{{\cal D}}
\newcommand{\sJ}{{\cal J}}
\newcommand{\sL}{{\cal L}}
\newcommand{\sN}{{\cal N}}
\newcommand{\sO}{{\cal O}}
\newcommand{\sV}{{\cal V}}
\newcommand{\sX}{{\cal X}}
\newcommand{\bbN}{{\mathbb N}} 
\newcommand{\bbR}{{\mathbb R}}
\newcommand{\bbE}{{\mathbb E}}
\newcommand{\VI}{$\mathrm{VI}(F(\cdot\ ;d),\sC)$}
\newcommand{\VITheta}{$\mathrm{VI}(F_{\Theta}(\cdot\ ;d),\sC)$}
\setlist[enumerate]{leftmargin=.5in}
\setlist[itemize]{leftmargin=.5in}
\crefname{hypothesis}{Hypothesis}{Hypotheses}
\title{Operator Splitting for Learning to Predict   Equilibria in Convex Games}
\author{D. McKenzie\thanks{Department of Mathematics, Colorado School of Mines
  (\email{dmckenzie@mines.edu}).}
\and H. Heaton\thanks{Typal Academy
  (\href{https://research.typal.academy}{research.typal.academy}).}
\and Q. Li\thanks{Decision Intelligence Lab, DAMO Academy, Alibaba US.}
\and S. Wu Fung\footnotemark[2]
\and S. Osher\thanks{Department of Mathematics, UCLA.}
\and W. Yin\footnotemark[4]}
\newcommand{\edit}[1]{\textcolor{black}{#1}}
\newcommand{\reedit}[1]{\textcolor{black}{#1}}
\def\thmUniversalApproximation{
If Assumptions (A1)--(A5) hold, then, for all $\varepsilon > 0$, there exists  $F_{\Theta}(\cdot;\! \cdot)$ such that $\displaystyle \max_{d \in \sD}\|x_{d}^{\star} - \sN_{\Theta}(d)\|_2 \leq \varepsilon$. 
}
\def\thmDysConvergence{
Suppose $\sC^{1}$ and $\sC^2$ are as in \cref{thm:Use_DYS} and $F_{\Theta}$ is $\alpha$-cocoercive. If a sequence $\{z^k\}$ is generated via  $z^{k+1} = T_\Theta(z^k; d)$ for $T_\Theta$ in \eqref{eq: T-DYS-Param} with $\gamma = \alpha$ and $\{z : z = T_\Theta(z;d)\}\neq \varnothing$, then  $P_{\sC_1}(z^k)  \rightarrow x_d^\circ = \sN_{\Theta}(d)$.
\edit{Moreover, the computational complexity to obtain an estimate $x^k$ with fixed point residual norm no more than $\epsilon >0$ is $\sO\left( \mathrm{dim}(\sC)^2 / \epsilon^2 \right).$}
}
\begin{document}

\maketitle

\begin{abstract}
Systems of competing agents can often be modeled as games. Assuming rationality, the most likely outcomes are given by an {\em equilibrium} (\eg a Nash equilibrium). In many practical settings,   games are influenced by   {\em context}, \ie additional data beyond the control of any agent (\eg weather for traffic and fiscal policy for market economies). Often the exact game mechanics are unknown, yet vast amounts of historical data consisting of (context, equilibrium) pairs are available, raising the possibility of {\em learning} a solver which predicts the equilibria {\em given only the context}. 
\edit{We introduce Nash Fixed Point Networks (N-FPNs), a class of neural networks that naturally output equilibria.}
Crucially, N-FPNs employ a constraint decoupling scheme to handle complicated agent action sets while avoiding expensive projections. \edit{Empirically, we find} N-FPNs are compatible with the recently developed Jacobian-Free Backpropagation technique 
for training implicit networks, making them significantly faster and easier to train than prior models. 
\edit{Our experiments show} N-FPNs are capable of scaling to problems orders of magnitude larger than existing learned game solvers.
\end{abstract}

\begin{keywords}
end-to-end learning, variational inequalities, game theory, operator splitting, machine learning.
\end{keywords}

\begin{MSCcodes}
68Q25, 68R10, 68U05
\end{MSCcodes}

\section{Introduction}
\begin{wrapfigure}[15]{r}{0.47\textwidth}
    \centering 
    \includegraphics[width=0.43\textwidth]{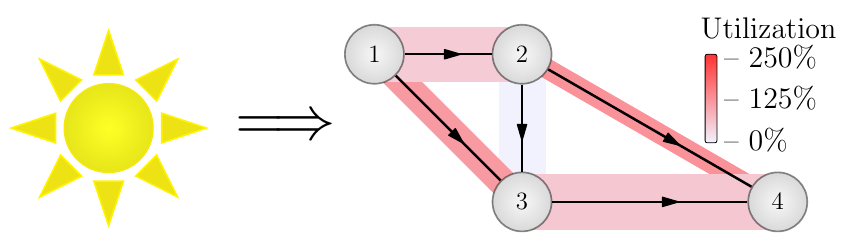} \\[-2pt]    
    \includegraphics[width=0.43\textwidth]{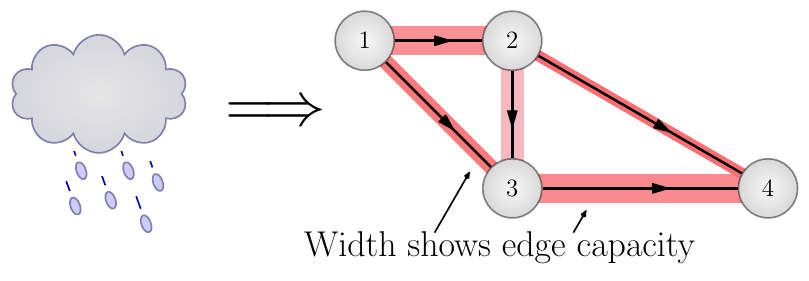}     
    \caption{Proposed N-FPNs can predict traffic flow (specifically, the utilization of each road segment) given only contextual information (\eg weather).}
    \label{fig: game-toy-diagram}
\end{wrapfigure}
Many recent works in deep learning   highlight  the power of using end-to-end learning in conjunction with known analytic models and constraints \cite{bertsimas2015data,romano2017little, de2018end,ling2018game,kotary2021end,gilton2021deep, chen2021learning, heaton2022wasserstein}. This best-of-both-worlds approach fuses the flexibility of learning-based approaches with the interpretability of models derived by domain experts. We further this line of research by proposing a practical framework for learning to predict the outcomes of contextual ({\em i.e.} parametrized) games from historical data  while respecting constraints on players' actions. 
Many social systems can aptly be analyzed as games, including market economies \cite{arrow1954existence}, traffic routing \cite{wardrop1952road}, even penalty kicks in soccer \cite{azar2011soccer}. We consider games with costs parametrized by a context variable $d$, beyond the control of any player. As in the multi-armed bandit literature, we call such games {\em contextual} \cite{sessa2020contextual}. For example, in traffic routing, $d$ may encode factors like   weather, local sporting events or tolls influencing drivers' commutes.

Game-theoretic analyses frequently assume players' cost functions are known {\em a priori} and seek to predict how  players will act, typically by computing a Nash equilibrium $x_d^{\star}$ \cite{nash1950equilibrium}. Informally, a Nash equilibrium is a choice of strategy for each player such that no player can improve their outcomes via unilateral deviation. 
However,  in practice the cost functions are frequently unknown. Here, we consider the problem of predicting equilibria, {\em given only contextual information}, without knowing players' cost functions. We do so by learning an approximation to the game gradient (see \eqref{eq: game-gradient}), so in this sense our work is closely related to the literature on inverse game theory. For technical reasons we focus on games in which each player's cost function is strongly convex. This property is sometimes referred to as ``diagonal strict convexity'' \cite{rosen1965existence}. We note this class of games include many routing games \cite{roughgarden2007routing}. Furthermore, in many cases it is possible---even desirable---to add a regularizer to each cost function such that it becomes strongly convex \cite{mertikopoulos2016learning}, see \cref{remark: QRE} for further discussion. 

For non-contextual games, many prior works (see  \Cref{section: related-works}) seek to use (noisy) observations of the Nash equilibrium to learn the cost functions. Our approach is different. We propose a new framework: Nash Fixed Point Networks (N-FPNs). Each N-FPN is trained on historical data pairs $(d, x_d^\star)$ to ``predict the appropriate game from context and then output game equilibria'' by \reedit{tuning an operator so that its} fixed points coincide with Nash equilibria. N-FPN inferences are computed by repeated application of the operator until a fixed point   condition is satisfied. Thus, by construction, N-FPNs are implicit networks---neural networks evaluated using an arbitrary number of layers~\cite{winston2020monotone,bai2019deep,ghaoui2019implicit,fung2022jfb}---and the operator weights can be efficiently trained using Jacobian-free backpropagation   \cite{fung2022jfb}. Importantly, the N-FPN architecture incorporates a constraint decoupling scheme derived from an application of Three-Operator splitting \cite{davis2017three}. This decoupling allows N-FPNs to avoid costly projections onto agents' action sets; the computational bottleneck of prior works \cite{ling2018game,ling2019large,li2020end}. This innovation allows N-FPNs to scale to large games, or to games with action sets significantly more complicated than the probability simplex.

One might enquire as to the {\em expressiveness} of N-FPNs. That is, can a given contextual game be arbitrarily well-approximated by a N-FPN? We answer this question in the affirmative, at least for contextual games possessing the diagonal strict convexity property alluded to above. 

Finally, we demonstrate how N-FPNs can be used to predict other closely related kinds of equilibria, particularly quantal response equilibria \cite{mckelvey1995quantal} and Wardrop equilibria \cite{wardrop1952road}. We complement our theoretical insights with numerical experiments demonstrating the efficacy and scalability of the N-FPN framework. We end by discussing how N-FPNs might be applied to other phenomena modeled by variational inequalities. \\

\paragraph{Contributions} 
We provide a \textit{scalable} data-driven framework for efficiently predicting equilibria in systems modeled as contextual games. Specifically, we do the following.
\begin{itemize}
    \item[\itemsymbol] Provide general, expressible, and end-to-end trained model predicting Nash equilibria.
    \item[\itemsymbol] Give  scheme for decoupling constraints for efficient forward and backward propagation.
    \item[\itemsymbol] Prove N-FPNs are universal approximators for a certain class of contextual games.
    \item [\itemsymbol] Demonstrate empirically the scalability of N-FPNs to large-scale problems.
    
\end{itemize}

\begin{table*}[t]
    \centering
    \small
    \begin{tabular}{c c c c c}
        Attribute &   Analytic & Feedforward & \cite{ li2020end},\cite{ling2018game} & Proposed N-FPNs\\\hline 
        \rowcolor{\ROWCOLOR}
        Output is Equilibria & \checkmark &   & \checkmark  & \checkmark \\ 
        Data-Driven & & \checkmark &  \checkmark & \checkmark \\
         \rowcolor{\ROWCOLOR}
         Constraint Decoupling &   \checkmark & &  & \checkmark   \\
 
        Simple Backprop & NA & \checkmark &  &  \checkmark  \\  
        
    \end{tabular}
    \caption{Comparison of different equilibria prediction methods. Analytic   modeling algorithms yield  game equilibria that are not data-driven. Traditional feed-forward networks are data-driven and easy to train, but are incapable of outputting a game equilibrium. Existing game-based implicit models are nontrivial to train (backpropagate) and require intricate forward propagation. }
    \label{tab: contributions}
\end{table*}

\section{Preliminaries}
\label{sec: preliminaries}


We begin with a brief review of relevant game theory. After establishing notation, we provide a set of assumptions under which the mapping $d \mapsto x_{d}^{\star}$ is ``well-behaved.'' 
We then describe variational inequalities  and   how Nash equilibria can be characterized using fixed point equations.  

\subsection{Games and Equilibria}
\label{sec: Games and Equilibria}
Let $\sX$ be a finite dimensional Hilbert space. A $K$-player normal form contextual game is defined by action sets\footnote{These are also known as   \textit{decision sets} and/or   \textit{strategy sets}.} $\sV_k$ 
and cost functions  $u_{k}: \sX\times \sD \to \bbR$ for $k\in [K]$, where the constraint profile is $\sC\triangleq \sV_1\times \ldots \times \sV_K$ and $\sD$ denotes the set of contexts (\ie data space). 
The $k$-th player's action   $x_k$ is constrained to the action set $\sV_k$, yielding an action profile  $x =\left({x}_{1}, \ldots, {x}_{K}\right) \in \sC \subseteq \sX$.
Actions of all players other than $k$ are 
${x}_{-k}=\left({x}_{1}, \ldots, {x}_{k-1}, {x}_{k+1}, \ldots, {x}_{K}\right)$. Each rational player aims to minimize their cost function $u_{k}$ by controlling only ${x}_{k}$ while explicitly knowing  $u_{k}$ is impacted by   other players' actions ${x}_{-k}$. An action profile $x_d^\star$ is a {\em Nash equilibrium} (NE)  provided, for all $x_k \in \sV_k$  and  $k \in [K]$, 
\begin{equation}
 u_{k}(x_{k},x_{d,-k}^{\star};d) \geq u_{k}(x_{d,k}^{\star},x_{d,-k}^{\star};d). 
\end{equation}
In words, $x_d^{\star}$ is a Nash equilibrium if no player can decrease their cost by unilaterally deviating from $x_d^{\star}$. Throughout, we make the following assumptions:

\begin{itemize}
    \item[(A1)] $\sC\subset \sX$ is closed and convex.
    \item[(A2)] The cost functions $u_{k}(x; d)$ are continuously differentiable with respect to $x$. 
    \item[(A3)] For all $x$, each $\nabla_ku_k(x;\cdot )$ is Lipschitz.
    \item[(A4)] Each cost function $u_k(x_k,x_{-k};d)$ is $\alpha$-strongly convex with respect to $x_k$. 
    \item[(A5)] The set of contextual data $\sD$ is compact. 
\end{itemize}

When the Assumption (A2) holds, we define the {\em game gradient} by
\begin{equation} 
    F(x;d) \triangleq \left[\nabla_{x_1}u_1(x;d)^\top \cdots\  \nabla_{x_K}u_K(x;d)^\top\right]^\top.
    \label{eq: game-gradient}
\end{equation}

\subsection{Variational Inequalities}
This subsection briefly outlines variational inequalities and their connection to games.

\begin{definition}
    For $\alpha > 0$ a mapping $F\colon\sX\times \sD \rightarrow\sX$ is \textit{$\alpha$-cocoercive}\footnote{This is also known as $\alpha$-inverse strongly monotone}  if, 
    \begin{equation}
        \left< F(x;d) - F(y;d), x-y\right> \geq \alpha \|F(x;d)-F(y;d)\|^2 \text{ for all } x,y\in\sX, d \in \sD
        \label{eq: def-F-cocoercive}
    \end{equation}
    \reedit{and $\alpha$-strongly monotone if 
    \begin{equation}
        \left< F(x;d) - F(y;d), x-y\right> \geq \alpha \|x - y\|^2 \text{ for all } x,y\in\sX, d \in \sD.
        \label{eq: def-F-monotone}
    \end{equation}}
    \noindent If \eqref{eq: def-F-monotone} holds for $\alpha=0$, then $F(\cdot\ ;d)$ is   \textit{monotone}.
\end{definition}

  
\begin{definition}
Given $d \in \sD$, a point $x_d^\star\in\sC$ is   a \textit{variational inequality \eqref{eq: VI-def} solution} provided
\begin{align}
\left<F(x_d^\star;d), x - x_d^\star\right> \geq 0,\quad \mbox{for all \ } x\in\sC.    
\tag{VI}
\label{eq: VI-def}
\end{align}
The solution set for \eqref{eq: VI-def} is denoted by \VI.
\end{definition}

 Nash equilibria may be characterized using VIs \cite{facchinei2007finite}; namely, 
\begin{equation}
    x_d^{\star} \text{ is an NE} \iff x_d^{\star} \in \mbox{\VI}.
    \label{eq: VI-equals-Nash}
\end{equation}
That is, $x_d^\star$ is an NE if no unilateral change improves any single cost and a VI solution if no feasible update improves the sum of all costs. By~\eqref{eq: VI-equals-Nash},  these   views are equivalent.

\subsection{Implicit Neural Networks}
Commonplace feedforward neural networks are a composition of  parametrized functions $T^{\ell}_{\Theta_{\ell}}(\cdot)$ (called layers) which take    data $d$ as input and return a prediction $y$. Formally, given $d$, a  network $\mathcal{N}_\Theta$ computes each inference  $y$ via 
\begin{equation}
\begin{split}
     & y = \mathcal{N}_\Theta(x) = x^{L+1},\\ 
\text{ where } & x^1 =d \ \mbox{and}\ x^{\ell+1} = T^{\ell}_{\Theta_{\ell}}(x^{\ell})
\ \ \mbox{for all $\ell \in [L]$.}
\end{split}
\end{equation} 
Instead of an explicit cascade of distinct compositions, {\em implicit neural networks} $\sN_\Theta$  
use a single mapping $T_\Theta$, and the output $\sN_\Theta(d)$ is defined {\em implicitly}\footnote{We reserve the notation $x^\star_d$ for denoting equilibria, fixed points, or VI solutions associated to the true game we wish to approximate. We use $x^\circ_d$ for denoting equilibria, fixed points, or VI solutions associated to the approximating neural network.} by an equation, \eg 
\begin{align}
     \sN_\Theta(d) \triangleq x^{\circ}_d  
    \ \     \text{where } \ \  x^{\circ}_{d} = T_{\Theta}(x^{\circ}_d;d). \label{eq:FP}  
\end{align}
Equation \eqref{eq:FP} can be solved via a number of methods, {\em e.g.}   fixed point iteration: $x^{k+1} = T_{\Theta}(x^{k};d)$. Implicit neural networks   recently received much attention as they admit a memory efficient backprop \cite{bai2019deep,bai2020multiscale,ghaoui2019implicit,fung2022jfb, geng2021training}. By construction, the output of $\mathcal{N}_{\Theta}(d)$ is a fixed point. Thus, several recent works explore using implicit networks in supervised learning problems where the target to be predicted can naturally be interpreted as a fixed point \cite{heaton2021feasibility,gilton2021deep,heaton2021learning,heaton2022explainable,mckenzie2023faster}. 

\section{Well-behaved equilibria}
We verify that assumptions (A1)--(A5) are sufficient to guarantee that $x_d^{\star}$ depends smoothly on $d$. This is crucial for showing that an N-FPN can approximate the relationship between $d$ and $x^\star_d$ (see \Cref{thm: Universal Approx}).

\begin{theorem}
If Assumptions (A1) to (A5) hold, then
\begin{enumerate}[label=(\arabic*)]
    \item there is a unique Nash Equilibrium $x_d^{\star}$\  for all $d \in \sD$;
    \item the map $d \mapsto x_{d}^{\star}$ is Lipschitz continuous.
\end{enumerate}
\label{thm:Lipschitz_Equib}
\end{theorem}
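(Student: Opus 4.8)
The plan is to recast the Nash equilibrium as the solution of a parametrized variational inequality and then exploit strong monotonicity. First I would introduce the \emph{game map} (pseudo-gradient) $F(x;d)\triangleq (\nabla_1 u_1(x;d),\ldots,\nabla_K u_K(x;d))$, which is well defined and continuous in $x$ by (A2). Since each player's subproblem $\min_{x_k\in\sV_k} u_k(x_k,x_{-k};d)$ is convex by (A4) with $\sV_k$ closed and convex (A1), its first-order optimality condition is both necessary and sufficient; stacking these conditions over $k$ shows that $x_d^\circ$ is a Nash equilibrium if and only if it solves \VI, i.e. $\langle F(x_d^\circ;d),x-x_d^\circ\rangle\ge 0$ for all $x\in\sC$, equivalently $x_d^\circ=\proj{\sC}(x_d^\circ-\gamma F(x_d^\circ;d))$ for any $\gamma>0$. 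This reduces both claims to statements about the parametrized VI and aligns with the fixed-point viewpoint used later.

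For claim (1), I would establish that $F(\cdot;d)$ is strongly monotone on $\sC$, uniformly in $d$, with some modulus $\mu>0$. Given this, standard VI theory---a continuous, strongly monotone operator over a closed convex set admits a unique solution, the coercivity from strong monotonicity compensating for $\sC$ possibly being unbounded---immediately yields existence and uniqueness of $x_d^\circ$ for every $d\in\sD$. \textbf{This is the step I expect to be the main obstacle}: (A4) only controls the \emph{diagonal} blocks $\nabla_k^2 u_k\succeq\alpha I$ of the (generally nonsymmetric) game Jacobian, whereas monotonicity of $F$ constrains the full operator, including the cross-player interaction blocks $\nabla_j\nabla_k u_k$. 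I would therefore need either a diagonal-dominance / small-interaction estimate bounding the off-diagonal contributions against $\alpha$, or an additional structural hypothesis (e.g. positive definiteness of the symmetrized Jacobian, as in Rosen's diagonal strict convexity) to promote per-player strong convexity into strong monotonicity of the joint map.

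For claim (2), assuming strong monotonicity from (1), I would use the standard two-inequality comparison. Writing $x=x_d^\circ$ and $x'=x_{d'}^\circ$, I test the VI for $d$ at $y=x'$ and the VI for $d'$ at $y=x$ and add, obtaining $\langle F(x';d')-F(x;d),x'-x\rangle\le 0$. Splitting the left side through the intermediate term $F(x';d)$ and applying strong monotonicity to the difference $\langle F(x';d)-F(x;d),x'-x\rangle\ge\mu\|x'-x\|^2$ gives
\[
  \mu\,\|x'-x\|^2 \le \langle F(x';d)-F(x';d'),\,x'-x\rangle \le \|F(x';d)-F(x';d')\|\,\|x'-x\|.
\]
The Lipschitz dependence (A3) then bounds $\|F(x';d)-F(x';d')\|\le L\|d-d'\|$, and dividing through yields $\|x_d^\circ-x_{d'}^\circ\|\le (L/\mu)\|d-d'\|$, the claimed Lipschitz continuity.

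A secondary point to handle carefully is the \emph{uniformity} of $L$: (A3) supplies a Lipschitz-in-$d$ constant at each fixed $x$, whereas the bound above is evaluated at the moving point $x'=x_{d'}^\circ$. I would close this gap by first observing that $\{x_d^\circ:d\in\sD\}$ is bounded---the preceding estimate already shows $d\mapsto x_d^\circ$ is at least continuous, and $\sD$ is compact by (A5)---so that a single $L$ valid on this bounded set suffices. Modulo the strong-monotonicity step, every remaining estimate is routine.
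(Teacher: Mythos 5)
Your overall route is the same as the paper's: recast the Nash equilibrium as the solution of the parametrized variational inequality $\mathrm{VI}(F(\cdot;d),\sC)$, get uniqueness from strong monotonicity of the game gradient, and get Lipschitz dependence on $d$ from a sensitivity estimate. Where you genuinely differ is part (2): the paper invokes Dafermos (1988, Theorem 2.1) to obtain a \emph{local} Lipschitz property around each $\bar d$ and then upgrades it to a global one by a covering argument over the compact $\sD$, whereas you prove the bound directly by testing the two VIs against each other's solutions, which yields the global constant $L/\mu$ in one step; your argument is self-contained and is essentially the proof underlying the cited sensitivity theorem. Your worry about the uniformity of the Lipschitz-in-$d$ constant (since (A3) is stated pointwise in $x$) is also warranted and is passed over silently in the paper; note you can avoid the mild circularity in your boundedness argument by splitting through $F(x;d')$ instead of $F(x';d)$, which places the Lipschitz-in-$d$ evaluation at the \emph{fixed} point $x=x_{d_0}^\circ$ and bounds $\|x_{d'}^\circ - x_{d_0}^\circ\| \leq L_{x_{d_0}^\circ}\,\mathrm{diam}(\sD)/\mu$ outright.

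The obstacle you flag in part (1) is real, and the paper does not actually overcome it. Per-player $\alpha$-strong convexity (A4) controls only the diagonal blocks $\nabla^2_{kk}u_k \succeq \alpha I$ of the (nonsymmetric) Jacobian of $F$ and does not imply monotonicity of $F$. Concretely, take two players on $\sV_1=\sV_2=[-1,1]$ with costs (independent of $d$)
\begin{equation}
u_1(x) = \tfrac{\alpha}{2}x_1^2 + \beta x_1 x_2, \qquad u_2(x) = \tfrac{\alpha}{2}x_2^2 + \beta x_1 x_2, \qquad \beta > \alpha > 0.
\end{equation}
Each cost is $\alpha$-strongly convex in the player's own action, all of (A1)--(A5) hold, yet $(0,0)$, $(1,-1)$ and $(-1,1)$ are all Nash equilibria (e.g., given $x_2=-1$, player 1's unconstrained minimizer $\beta/\alpha$ exceeds $1$, so the best response is $x_1=1$, and symmetrically for player 2), so uniqueness fails. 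The paper's proof simply asserts the assumptions yield strong monotonicity --- its attributions are visibly scrambled, crediting (A3) (Lipschitzness in $d$) with monotonicity and (A5) (compactness of $\sD$) with strong monotonicity --- and cites Rosen's diagonal strict convexity; but Rosen's condition is a hypothesis on the full pseudogradient, exactly the ``additional structural hypothesis'' you identify, not a consequence of (A1)--(A5). So your honest assessment is correct: the strong-monotonicity step cannot be closed from the stated assumptions, the theorem as stated needs (A4) strengthened to joint strong monotonicity of $F$ (or a diagonal-dominance condition on the cross-player blocks $\nabla_j\nabla_k u_k$), and modulo that hypothesis your argument for both parts is complete and, for part (2), tighter than the paper's.
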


\begin{proof}
 Assumption (A4) implies the game gradient $F(\cdot\!; d)$ is $\alpha$-strongly monotone. Hence, by \cite[Theorem 2]{rosen1965existence} the Nash equilibrium $x_d^{\star}$ is unique; see also \cite[Theorem 2.2.3]{facchinei2007finite}. This proves part 1. For part 2, first observe that (A3) implies $F$ is Lipschitz continuous with respect to $d$ in addition to being $\alpha$-strongly monotone. \cite[Theorem 2.1]{dafermos1988sensitivity} then shows that around any fixed $\bar{d}\in\sD$ the map $d \mapsto x_d^{\star}$ is locally Lipschitz, {\em i.e.} there exists a constant $L_{\bar{d}}$ and an open neighborhood $N_{\bar{d}} \subset \sD$ of $\bar{d}$ upon which $d\mapsto x_d^{\star}$ is $L_{\bar{d}}$-Lipschitz continuous. As $\mathcal{D}$ is compact (Assumption (A5)) a standard covering argument converts this local Lipschitz property to a global Lipschitz property.
\end{proof}

\begin{remark}
\label{remark: QRE}
\edit{
Assumption (A4) is fairly restrictive, but is in line with prior work \cite{ling2018game, li2020end,allen2021using,bertsimas2015data,ratliff2014social,zhang2017data}.
For games where the $u_k$ are not strongly convex, one can add a regularizer: $\tilde{u}_k(x) = u_k(x) + h(x)$. As an illustrative example, consider the case where each player's action set is the probability simplex
\begin{equation}
   \sV_k = \Delta_n := \{x \in \mathbb{R}^n: \ \sum_{j}x_{k,j} = 1 \text{ and } x_{k,j} \geq 0\}. 
\end{equation}
Adding an entropic regularizer---{\em i.e.} $h(x) = \sum_jx_{k,j}\log(x_{k,j})$ as in \cite{ling2018game}---affords an elegant interpretation of the Nash Equilibrium of the resulting regularized game as the {\em Quantal Response Equilibrium} (QRE) \cite{mckelvey1995quantal} of the original game. QRE are a useful solution concept for boundedly rational agents ({\em e.g.} humans). They describe situations where agents are likely to select the best action, but may also select a sub-optimal action with non-zero probability. However, choosing such an $h$ means  $\|\nabla_k\tilde{u}_k\| \to \infty$ as $x_k$ approaches the boundary of $\sV_k$, which may be undesirable. Formally, one may resolve this by using a ``smoothed'' entropic regularizer $h(x) = \sum_jx_{k,j}\log(x_{k,j} + \epsilon)$, which does satisfy Assumption (A3), as discussed in \cite{liu2022inducing}. In practice this seems unnecessary; see Section~\ref{sec:Experiment_RPS}. Alternatively, one may use an $h(x)$ which does not diverge as $x_k$ approaches the boundary of $\sV_k$, such as a quadratic penalty. We refer the reader to \cite{mertikopoulos2016learning} for further discussion on the choice of $h(x)$ and the interpretation thereof.}
\end{remark}

\section{Proposed Method: Nash-FPNs} 
\label{sec: proposed_method}
\edit{Recall that our goal is to train a predictor capable of approximating $x^{\star}_d$ given only $d$. We assume a fixed, yet unknown, contextual game which induces a probability distribution $\mu$ on $\sD\times \sX$ relating $d$ and $x_{d}^{\star}$. As predictor we propose to use a {\em Nash Fixed Point Network} (N-FPN) $\sN_\Theta$, defined abstractly as the solution to a parametrized variational inequality:
\begin{equation}
    \sN_\Theta(d) \triangleq \mathrm{VI}(F_\Theta(\cdot;d), \sC).
    \label{eq:Def_as_VI}
\end{equation}
Below we discuss how a N-FPN can be viewed concretely as an implicit neural network.}
In our context, the set $\sC$ is a product of action sets $\sV_k$ and $F_{\Theta}(\cdot;\! \cdot)$ is a neural network with weights $\Theta$.

\edit{Fixing a smooth loss function $\ell:\sX\times\sX\rightarrow \bbR$, in principle one selects a predictor (\ie a choice of weights $\Theta$) via minimizing the population risk:
\begin{equation}
    \min_{\Theta} \bbE_{(d,x^{\star}_d)\sim\mu}\left[\ell( \sN_\Theta(d), x_d^\star)\right]
    \label{eq:Population_Risk}
\end{equation}
In practice one minimizes the {\em empirical risk}, given a training data set $\{(d_i,x^{\star}_{d_i})\}_{i=1}^N\sim_{\mu} \sD\times\sX$, instead \cite{vapnik1999nature}:
\begin{equation}
    \min_{\Theta}\sum_{i=1}^N \ell(\mathcal{N}_\Theta(d_i), x_{d_i}^\star)
    \label{eq:Empirical_Risk}
\end{equation}}
A similar approach was proposed in \cite{li2020end}; we discuss how our approach improves upon theirs in  \Cref{section: related-works}. First, we provide a novel theorem guaranteeing the proposed design has sufficient capacity to accurately approximate the mapping $d \mapsto x_{d}^{\star}$ for   games of interest.

\begin{theorem}[\sc Universal Approximation]
\thmUniversalApproximation
\label{thm: Universal Approx}
\end{theorem}
A proof of \Cref{thm: Universal Approx} can be found in the supplemental material.
\noindent Since N-FPNs are universal approximators in theory, two practical questions arise: 
\begin{enumerate}[label=\arabic*)]
    \item For a given $d$, how are inferences of $\sN_{\Theta}(d)$ computed?
    \item  How are weights $\Theta$ tuned using training data $\{d, x_d^{\star}\}$?
\end{enumerate}
We address each inquiry in turn.
As is well-known \cite{facchinei2007finite}, for all $\alpha > 0$, 
\begin{equation}
\begin{aligned}
     x_d^\circ \in \mbox{\VITheta}
    \quad\iff\quad 
    x_d^\circ = P_{\sC}(x_d^\circ \!-\! \alpha F_{\Theta}(x_d^\circ;d)),
    \label{eq: VI-Landweber}
    \end{aligned}
\end{equation}
where $P_{\sC}$ denotes the projection onto the set $\sC$, \ie $P_{\sC}(x) \triangleq \argmin_{y\in \sC} \|y-x\|^2.$
When the   operator $P_{\sC}\circ (\mathrm{I} - \alpha F_\Theta)$ on the right hand side of \eqref{eq: VI-Landweber} is tractable and well-behaved, inferences of $\sN_\Theta(d)$ can be computed via a fixed point iteration, as in \cite{li2020end}. Unfortunately, for some $\mathcal{C}$ computing $P_{\sC}$ and $\mathrm{d} P_{\sC}/{\mathrm{d}z}$ requires a number of operations scaling {\em cubicly} with the dimension of $\sC$ \cite{amos2017optnet}, rendering this approach intractable   even for moderately sized problems.

Our key insight is that {\em there are multiple ways  to turn \eqref{eq:Def_as_VI} into a fixed point problem}. Specifically, we propose a fixed point formulation which, while superficially more complicated, avoids expensive projections {\em and} is easy to backpropagate through. The key ingredient (see  \Cref{thm:Use_DYS}) is an application of three operator splitting \cite{davis2017three} which replaces $P_{\sC}$ with projection operators possessing simple and explicit projection formulas. Similar ideas can be found in \cite{davis2017three,pedregosa2018adaptive}, but to the best of our knowledge, this splitting has not yet appeared in the implicit neural network literature. 

\begin{algorithm}[t]
\caption{ Nash Fixed Point Network\  (Abstract Form)}
\label{alg: N-FPN-abstract}
\begin{algorithmic}[1]           
    \STATE{\begin{tabular}{p{0.52\textwidth}r}
     \hspace*{-8pt}  $\sN_\Theta(d):$
     & 
     \end{tabular}}    
    
    \STATE{\begin{tabular}{p{0.52\textwidth}r}
     \hspace*{2pt} $z^1 \leftarrow \tilde{z}$, \ $z^0\leftarrow \tilde{z}$, \ $n \leftarrow 1$
     & 
     \end{tabular}}

    \STATE{\begin{tabular}{p{0.52\textwidth}r}
     \hspace*{2pt} {\bf while}   $\|z^{n} - z^{n-1}\| > \varepsilon$ {\bf or} $n=1$
     & 
     \end{tabular}}  
     
    \STATE{\begin{tabular}{p{0.52\textwidth}r}
     \hspace*{10pt} $x^{n+1} \leftarrow P_{\sC^1}(z^n)$
     & 
     \end{tabular}}      

    \STATE{\begin{tabular}{p{0.52\textwidth}r}
     \hspace*{10pt} $y^{n+1} \leftarrow P_{\sC^2}(2x^{n+1}-z^n- \gamma F_\Theta(x^{n+1};d))$
     & 
     \end{tabular}}      
     
    \STATE{\begin{tabular}{p{0.52\textwidth}r}
     \hspace*{10pt} $z^{n+1} \leftarrow z^n -x^{n+1} + y^{n+1}$
     & 
     \end{tabular}}      
     
    \STATE{\begin{tabular}{p{0.52\textwidth}r}
     \hspace*{10pt} $n\leftarrow n+1$
     & 
     \end{tabular}}       

    \STATE{\begin{tabular}{p{0.52\textwidth}r}
     \hspace*{2pt} {\bf return} $P_{\sC^1}(z^n)$
     & 
     \end{tabular}}   
\end{algorithmic}
\end{algorithm}

We present this architecture concretely as  \Cref{alg: N-FPN-abstract}. With a slight abuse of terminology, we refer to this architecture also as an N-FPN. Although we find  \Cref{alg: N-FPN-abstract} to be most practical, we note other operator-based methods  (\eg ADMM and PDHG) can be used within the N-FPN framework via equivalences of different fixed point formulations of the VI.

The proposed fixed point operator $T_\Theta$ below in \eqref{eq: T-DYS-Param} is computationally cheaper to evaluate than that in \eqref{eq: VI-Landweber} when the projections $P_{\sC_1}$ and $P_{\sC_2}$ are computationally cheaper than $P_{\sC}$. For example, suppose $\sC$ is a polytope written in general form: $\sC = \{x: Ax = b \text{ and } x \geq 0\}$. Here, computing $P_{\sC}(x)$ amounts to solving the quadratic program $\min_{y\in\sC}\|x-y\|_2^2$. However, we may instead take $\sC_1 = \{x: Ax = b\}$ and $\sC_2 = \{x: x \geq 0\}$, both of which can\footnote{This depends on some properties of $A$ (\eg rank).} enjoy straightforward closed-form projection operators $P_{\sC_1}$ and $P_{\sC_2}$. Also, taking $\sC_2 = \sC$ and $\sC_1 = \sX$ (\ie the whole space) reduces \eqref{eq: VI-DYS} to \eqref{eq: VI-Landweber}. For completeness, we present this special case of N-FPN as  \Cref{alg: N-FPN-Simple}, as this is more comparable to the approaches proposed in prior work \cite{ling2018game, li2020end}. 

 Below we provide a lemma justifying the decoupling of constraints in the action set $\sC$. 
 Here we make use of polyhedral sets\footnote{A set is polyhedral if it is of the form $\{ x : \left<x,a^i\right> \leq b_i, \ \mbox{for \ } i \in [p]\}$, for $p\in\bbN$.}; however, this result also holds in a more general setting utilizing relative interiors of $\sC^1$ and $\sC^2$. By $\delta_\sC\colon \sX\rightarrow {\bbR}\cup \{+\infty\}$ we denote the indicator function defined such that $\delta_{\sC}(x) = 0$ in $\sC$ and $+\infty$ elsewhere. The subgradient of the indicator function (also known as the normal cone of $\sC$) is denoted by $\partial \delta_{\sC}$.
\begin{lemma}
\reedit{Fix $\gamma > 0$.} Suppose $\sC = \sC_1\cap \sC_2$ for convex $\sC_1$ and $\sC_2$.
If both $\sC_i$ are polyhedral or have relative interiors with a point in common and the VI has a unique solution, then
\begin{equation} 
     T_\Theta(x;d)
    \triangleq x \!-\! P_{\sC^1}(x) + P_{\sC^2}\left( 2P_{\sC^1}(x) \!-\! x \!-\! \gamma F_\Theta (P_{\sC^1}(x);d)\right)
    \label{eq: T-DYS-Param} 
\end{equation}
yields the equivalence 
\begin{equation}
    \sN_\Theta(d) = 
       x_d^\circ
    \ \iff \ 
    x_d^\circ = P_{\sC^1}(z^{\circ}_d) \text{ where } z^{\circ}_d = T_{\Theta}(z^{\circ}_d;d).
    \label{eq: VI-DYS}
\end{equation}
\label{thm:Use_DYS}
\end{lemma}

\begin{proof}
    We begin with the well-known equivalence relation \cite{facchinei2007finite}:
    \begin{equation}
        x_d^\circ \in \mbox{\VITheta}
        \ \iff \ 
        0 \in F_{\Theta}(x_d^\circ\ ;d) + \partial \delta_{\sC}(x_d^\circ).
        \label{eq: VI-Inclusion-equal-appendix}
    \end{equation}
    Because $\sC^1$ and $\sC^2$ are either polyhedral sets or share a common relative interior, we may apply \cite[Theorem 23.8.1]{rockafellar1970convex} to assert
    \begin{equation}
        \partial \delta_\sC = \partial\delta_{\sC^1} + \partial \delta_{\sC^2}.
        \label{eq: C-split-constraint-appendix}
    \end{equation}
    Consider three maximal\footnote{A monotone operator $M$ is \textit{maximal}  if there is no other monotone operator  $S$ such that $\mbox{Gra}(M) \subset \mbox{Gra}(S)$ properly \cite{ryu2022large}. This is a technical assumption that holds for all cases of our interest.}  monotone operators $A$, $B$ and $C$, with $C$ single-valued.
For $\gamma > 0$, let $J_{\gamma A}$ and $R_{\gamma A}$ be the resolvent of $\gamma A$ and reflected resolvent of $\gamma A$, respectively, \ie 
\begin{equation}
    J_{\gamma A} \triangleq (\mathrm{I} + \gamma A)^{-1}
    \quad \mbox{and} \quad 
    R_{\gamma A} \triangleq 2 J_{\gamma A} -\mathrm{I}.
\end{equation}
In particular,   the resolvent of $ \partial \delta_{\sC^i}$ is precisely the projection operator $P_{\sC^i}$ \cite[Example 23.4]{bauschke2017convex}.
    Using three operator splitting  (\eg see \cite[Lemma 2.2]{davis2017three} and \cite{ryu2022large}), we obtain the equivalence 
    \begin{align}
         0 \in (A+B+C)(x)
        \ \iff \ 
        x = J_{\gamma B}(z), 
    \end{align}
    where 
    \begin{align}
        z =  z - J_{\gamma B}(z) + J_{\gamma A}(R_{\gamma B}-\gamma CJ_{\gamma B})(z). 
        \label{eq: DYS-Inclusion-Equivalence}
    \end{align} 
    Setting $A = \partial \delta_{\sC^2}$, $B = \partial \delta_{\sC^1}$, and $C = F_{\Theta}$, \eqref{eq: DYS-Inclusion-Equivalence} reduces to
    \begin{align}
        0 \in F_{\Theta}(x_d^\circ; d) + \partial \delta_{\sC^{1}}(x_d^\circ) + \partial \delta_{\sC^{2}}(x_d^\circ)     
        \quad \iff \quad
         x_d^\circ = P_{\sC_1}(z_d^\circ), 
          \ \mbox{where }           
        z_d^\circ = T_\Theta(z_d^\circ;d).
        \label{eq: DYS-inclusion-appendix}
    \end{align}
    Combining \eqref{eq: VI-Inclusion-equal-appendix}, \eqref{eq: C-split-constraint-appendix}, and \eqref{eq: DYS-inclusion-appendix} yields the desired result.
\end{proof}

\subsection{Forward propagation}
Given the operator $T_{\Theta}(\cdot  ; d)$, there are many algorithms for determining its fixed point $x_d^{\circ}$. Prior works \cite{ling2018game,li2020end} use Newton-style methods, which are fast for small-scale and sufficiently smooth problems. But they may scale poorly to high dimensions (\ie large $\mbox{dim}(\sC)$). We employ Krasnosel'ski\u{\i}-Mann (KM) iteration, which is the abstraction of splitting algorithms with low per-iteration computational and memory footprint. This is entirely analogous to the trade-off between first-order (\eg gradient descent, proximal-gradient) and second-order methods  (\eg  Newton) in high dimensional optimization; see \cite{ryu2022large} for further discussion. The next theorem provides a sufficient condition under which KM iteration converges. As the proof is standard we relegate it to the supplemental material.
\begin{theorem}
\thmDysConvergence
\label{thm: DYS_Convergence}
\end{theorem} 
We simplify the iterate updates for $T_\Theta$ in (\ref{eq: T-DYS-Param}) by introducing auxiliary sequences $\{x^k\}$ and $\{y^k\}$; see \Cref{alg: N-FPN-abstract}. 

\reedit{\begin{remark} There are several ways to design the architecture of $F_{\Theta}$ so that it is guaranteed to be cocoercive, regardless of $\Theta$. For example:
\begin{enumerate}
    \item One easily verifies that if $F_{\Theta}(\cdot;d)$ is $\alpha$-strongly monotone and $L$-Lipschitz then it is $\alpha/L^2$ cocoercive \cite{marcotte1995convergence}. Spectral normalization \cite{miyato2018spectral} can be used to ensure $F_{\Theta}(\cdot;d)$ is $1$-Lipschitz for most architectural choices. If a linear (in $x$) $F_{\Theta}(\cdot;d)$ suffices, one may use the parametrization suggested in \cite{winston2020monotone}:
    \begin{equation}
        F_{\Theta}(x;d) = \left(\alpha I + A^{\top}A + B^{\top} - B\right)x + Q_{\Theta^\prime}(d)
        \label{eq:N-FPN-CoCo}
    \end{equation}
    with $\alpha \in (0,1)$ to guarantee that $F_{\Theta}$ is $\alpha$-strongly monotone, in addition to spectral normalization. Here $Q_{\Theta^{\prime}}$ is any neural network mapping context to latent space and $A,B$ may depend on $d$. We implement this in \Cref{sec:Experiment_RPS} and observe it performs well. If a more sophisticated $F_{\Theta}$ is required, one could use \cite{pesquet2021learning} to parametrize a nonlinear monotone operator $\tilde{F}_{\Theta}$, whence $F_{\Theta} = \alpha I + \tilde{F}_{\Theta}$ is $\alpha$-strongly monotone. We caution that the parametrization given in \cite{pesquet2021learning} is indirect---$\tilde{F}_{\Theta}$ is given as the resolvent of a nonexpansive operator $Q_{\Theta}$---and so is unlikely to work well with three operator splitting. 
    \item By the Baillon-Haddad theorem \cite{baillon1977quelques,bauschke2010baillon} if $f_{\Theta}(x;d)$ is a convex and $L$-Lipschitz differentiable $\mathbb{R}$-valued function then $F_{\Theta}(x;d) = \nabla_x f_{\Theta}(x;d)$ is $1/L$ cocoercive. Using the architecture proposed in \cite{amos2017input} guarantees $f_{\Theta}(\cdot;d)$ is convex (in $x$), and spectral normalization may again be applied to $F_{\Theta}(x;d)$ to ensure $1$-Lipschitz differentiability. However, it is not clear that $F_{\Theta}(x;d)$ constructed in this manner will be easy to train. Indeed, \cite[Section 2]{salimans2021should} suggests it is better to parametrize $F_{\Theta}(x;d)$ directly, and not as the gradient of some function for an analogous problem in diffusion-based generative modeling  
\end{enumerate}
\end{remark}}

\begin{algorithm}[H]
\caption{ N-FPN -- Projected Gradient (Special Case)}
\label{alg: N-FPN-Simple}
\begin{algorithmic}[1]           
    \STATE{\begin{tabular}{p{0.533\textwidth}r}
     \hspace*{-8pt}  $\sN_\Theta(d):$
     & 
     $\vartriangleleft$ Input data is $d$
     \end{tabular}}    
    
    \STATE{\begin{tabular}{p{0.533\textwidth}r}
     \hspace*{-2pt} $x^1 \leftarrow \tilde{x}$, $n\leftarrow 2$,
     & 
     $\vartriangleleft$ Initializations
     \end{tabular}}        
     
    \STATE{\begin{tabular}{p{0.533\textwidth}r}
     \hspace*{-2pt} $x^2 \leftarrow P_{\sC}(x^1-F_{\Theta}(x^1;d))$
     & 
     $\vartriangleleft$ Apply $T$ update
     \end{tabular}}          

    \STATE{\begin{tabular}{p{0.533\textwidth}r}
     \hspace*{-2pt} {\bf while} $\|x^{n} - x^{n-1}\| > \varepsilon$ 
     & 
     $\vartriangleleft$ Loop to converge
     \end{tabular}}  
     
    \STATE{\begin{tabular}{p{0.533\textwidth}r}
      \hspace*{4pt} $x^{n+1} \!\leftarrow\! P_{\sC}(x^n- F_{\Theta}(x^n;d))$
     & 
     $\vartriangleleft$ Apply $T$ update
     \end{tabular}}      

    \STATE{\begin{tabular}{p{0.533\textwidth}r}
      \hspace*{4pt} $n\leftarrow n+1$
     & 
     $\vartriangleleft$ Iterate counter
     \end{tabular}}       

    \STATE{\begin{tabular}{p{0.533\textwidth}r}
     \hspace*{-2pt} {\bf return} $x^n$
     &   $\vartriangleleft$ Output inference 
     \end{tabular}}   
\end{algorithmic}
\end{algorithm}    

\subsection{Backpropagation}
\label{sec: Backpropagation}

\edit{In order to solve \eqref{eq:Empirical_Risk} using gradient based methods such as stochastic gradient descent or ADAM \cite{kingma2015adam} one needs to compute the gradient  $\mathrm{d}\ell/\mathrm{d}\Theta$.} To circumvent backpropagating through each forward step, $\mathrm{d}\ell/\mathrm{d}\Theta$ may be expressed by\footnote{All arguments are implicit and use   N-FPNs defined by \eqref{eq: VI-DYS}.}
\begin{equation}
    \label{eq:Jacobian_Equation}
    \dd{\ell}{\Theta} = \dd{\ell}{x} \dd{\sN_\Theta}{\Theta}
    = \dd{\ell}{x} \dd{P_{\sC^1}(z_d^\circ)}{z}\dd{z_d^\circ}{\Theta}.
\end{equation}

Starting with the definition of $z_d^\circ$ as a fixed point:
\begin{equation}
    z^{\circ}_d = T_{\Theta}(z^{\circ}_d;d)
\end{equation}
and appealing to the implicit function theorem \cite{krantz2012implicit} we obtain the Jacobian-based equation
\begin{equation}
    \label{eq: adjoint}
    \dd{z_d^\circ}{\Theta} = \sJ_\Theta^{-1} \pp{T_{\Theta}}{\Theta}, \quad     \mbox{with} \quad \sJ_\Theta \triangleq \mathrm{Id} - \dd{T_\Theta}{z}.
\end{equation}
Solving \eqref{eq: adjoint} (assuming $\sJ_{\Theta}$ is invertible, see \Cref{remark: Jacobian_invertibility}) is computationally intensive for large-scale games. Instead, we employ JFB, which consists of replacing $\sJ_\Theta^{-1}$ in~\eqref{eq: adjoint} with the identity matrix \edit{and using
\begin{equation}
    p_{\Theta} := \dd{\ell}{x}\dd{P_{\sC^1}(z_d^\circ)}{z}\pp{T_{\Theta}}{\Theta}
\end{equation}
in lieu of $\mathrm{d}\ell/\mathrm{d}\Theta$.} This substitution  yields a preconditioned gradient  and  is   effective for training in image classification~\cite{fung2022jfb} and data-driven CT reconstructions~\cite{heaton2021feasibility}.
Importantly, using JFB only  requires backpropagating through a \emph{single} application of $T_{\Theta}$ (\ie the final forward step) \edit{in order to compute $\partial T_{\Theta}/\partial \Theta$}.
\edit{
\begin{remark}
\label{remark: Jacobian_invertibility}
One sufficient condition commonly used to guarantee the invertibility of $\sJ_{\Theta}$ is to assume $T_{\Theta}$ is {\em contractive}, although this condition rarely holds in practice, and implicit networks empirically perform well without a firm guarantee of invertibility \cite{bai2019deep,bai2020multiscale}. Contractivity of $T_{\Theta}$ is also necessary to guarantee that $p_{\Theta}$ is a descent direction, see \cite[Theorem 3.1]{fung2022jfb}, although again this appears unnecessary in practice \cite{fung2022jfb,ramzi2022shine,koyama2022music,zhangmultiset}. We note that $T_{\Theta}$ is averaged if $F_{\Theta}$ is cocoercive. The use of JFB for averaged operators is an ongoing topic of interest, see \cite{mckenzie2023faster}.
\end{remark}}

    

\section{Further Constraint Decoupling}
\label{sec: Decoupling}
As discussed above, the architecture expressed in  \Cref{alg: N-FPN-abstract} provides a massive computational speed-up over prior architectures when $\sC = \sC_1 \cap \sC_2$ and $P_{\sC_1}$ and $P_{\sC_2}$ admit explicit and computationally cheap expressions, \eg when $\sC$ is a polytope. Yet, in many practical problems $\sC$ has a more complicated structure. For example, it may be the intersection of a large number of sets (\ie $\sC=\sC_1\cap\cdots\cap\sC_K$) or the Minkowski sum of intersections of simple sets (\ie $\sC = \sC_{1}+\cdots + \sC_{K}$ where $\sC_k = \sC_k^1\cap \sC_k^2$). We generalize our decoupling scheme by passing to a product space. With this extended decoupling we propose an N-FPN architecture with efficient forward propagation (\ie evaluation of $\sN_\Theta$) and backward propagation (to tune weights $\Theta$) using only the projection operators $P_{\sC_k}$ (for the $K$-intersection case) or $P_{\sC_{k}^i}$ (for the Minkowski sum case). We discuss the Minkowski sum case here, and defer the $K$-intersection case to the supplemental material.

\subsection{Minkowski Sum} 
\label{sec: Minkowski Sum}
This subsection provides a decoupling scheme for   constraints structured as a Minkowski sum,\footnote{This arises in the modeling Wardrop equilibria in traffic routing problems.} \ie 
\begin{equation}
    \sC \triangleq \sC_1 + \cdots + \sC_K,
\end{equation}
where $\sC_{k}\subset \sX$ and $\sC_{k} = \sC_{k}^{1}\cap \sC_{k}^{2}$ for all $k\in[K]$. 
The core idea is to avoid attempting to directly project onto $\sC$ and instead perform simple projections onto each set $\sC_k^i$, assuming the projection onto $\sC_k^i$ admits an explicit formula.  First, define the product space
\begin{equation}
\overline{\sX} \triangleq \underbrace{\sX\times \sX \times \ldots \times \sX}_{\text{$K$ times }}.
\end{equation}
For notational clarity, we denote elements of $\overline{\sX}$ by overlines so   each element $\overline{x}\in \overline{\sX}$ is of the form $\overline{x} = (\overline{x}_1,\ldots, \overline{x}_K)$ with $\overline{x}_{k} \in \sX$ for all $k\in[K]$. Because $\sX$ is a Hilbert space, $\overline{\sX}$ is naturally endowed with a scalar product $\left<\cdot,\cdot\right>_{\overline{\sX}}$ defined by
\begin{equation}
    \langle \overline{x}, \overline{y} \rangle_{\overline{\sX}} \triangleq \sum_{k=1}^{K} \langle \overline{x}_k,\overline{y}_k \rangle .
\end{equation}
Between $\sX$ and the product space $\overline{\sX}$ we define two maps $Q^-\colon\overline{\sX}\rightarrow \sX$ and $Q^+ :\sX\rightarrow\overline{\sX} $:
\begin{equation}
\begin{aligned}
    Q^-(\overline{x}) \triangleq  
    \sum_{k=1}^{K} \overline{x}_k, \quad \text{ and } \quad 
    Q^+(x) \triangleq (\underbrace{x,x,\ldots, x}_{\text{$K$ copies}}).
\end{aligned}    
\end{equation}
In words, $Q^-(\overline{x})$ maps down to $\sX$ by adding together the blocks of $\overline{x}$ and $Q^+(x)$ maps up to $\overline{\sX}$ by making $K$ copies of $x$, thus motivating the use of ``$+$'' and ``$-$'' signs.
Define the Cartesian product
\begin{equation}
  \sA \triangleq \sC_1\times  \ldots \times \sC_{K} \subseteq \overline{\sX},
\end{equation}
and note  $Q^- \left(\sA\right) = \sC$. 
To further decouple each set $\sC_k$, also define the Cartesian products
\begin{equation}
    \sA^{i} \triangleq \sC_{1}^{i}\times \ldots \times \sC_{K}^{i} 
    \quad  \mbox{for all $i\in[2]$.}
\end{equation}
so $\sA = \sA^{1}\cap \sA^{2}$. Note the projection onto $\sA^{i}$ can be computed component-wise; namely,
\begin{equation}
    P_{\sA^{i}}(\overline{x}) = \left(P_{\sC^{i}_1}(\overline{x}_1), \ldots, P_{\sC^{i}_K}(\overline{x}_K)\right) 
   \quad \mbox{for all $i\in[2]$.}
\end{equation}
We now rephrase  \Cref{alg: N-FPN-abstract}, applied to a VI in the product space $\mathrm{VI}\left( Q^+\circ F\circ Q^-, \sA\right)$, into  \Cref{alg: N-FPN-product} using $\sA^{i}$ in lieu of $\sC^{i}$. Here $F$ represents a neural network $F_{\Theta}(\cdot;\! d)$ with weights $\Theta$; for notational clarity we omit the arguments and subscript. The use of  \Cref{alg: N-FPN-product} is justified by the following two lemmas. The first shows the product space operator is monotone whenever $F$ is. The second shows the solution sets to the two VIs coincide, after applying $Q^{-}$ to map down from $\overline{\sX}$ to $\sX$.

 
\begin{lemma}\label{lemma: product-F-monotone}
    If $F: \sX \to \sX$ is $\alpha$-cocoercive, then      $Q^+\circ F \circ Q^-$ on $\overline{\sX}$ is $(\alpha/K)$-cocoercive. 
\end{lemma}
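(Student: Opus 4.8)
The plan is to unpack the definition of cocoercivity for the composite operator $\bar F \triangleq Q^+ \circ F \circ Q^-$ on $\overline{\sX}$ directly, reducing everything to the cocoercivity hypothesis on $F$ together with the algebraic relationship between $Q^-$, $Q^+$, and the inner product $\langle \cdot, \cdot\rangle_{\overline{\sX}}$. The single key fact I would establish first is that $Q^+$ and $Q^-$ are adjoint (up to the conventions fixed above): for any $x \in \sX$ and $\overline{y} \in \overline{\sX}$,
\begin{equation}
\langle Q^+(x), \overline{y}\rangle_{\overline{\sX}} = \sum_{k=1}^K \langle x, \overline{y}_k\rangle = \langle x, Q^-(\overline{y})\rangle.
\end{equation}
This adjoint identity is what converts the product-space inner product appearing in the cocoercivity of $\bar F$ into an $\sX$-inner product to which the hypothesis on $F$ applies.

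First I would write out the left-hand side of the cocoercivity inequality for $\bar F$ at arbitrary $\overline{x}, \overline{y} \in \overline{\sX}$. Using the adjoint identity with the outer $Q^+$, and writing $u \triangleq Q^-(\overline{x})$ and $v \triangleq Q^-(\overline{y})$ in $\sX$, one gets
\begin{equation}
\langle \bar F(\overline{x}) - \bar F(\overline{y}),\, \overline{x} - \overline{y}\rangle_{\overline{\sX}} = \langle Q^+(F(u) - F(v)),\, \overline{x}-\overline{y}\rangle_{\overline{\sX}} = \langle F(u) - F(v),\, u - v\rangle,
\end{equation}
since $Q^-(\overline{x}-\overline{y}) = u - v$ by linearity of $Q^-$. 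Now the $\alpha$-cocoercivity of $F$ bounds this below by $\alpha\|F(u) - F(v)\|^2$. The only remaining step is to relate $\|F(u)-F(v)\|^2$ to $\|\bar F(\overline{x}) - \bar F(\overline{y})\|_{\overline{\sX}}^2$: since $\bar F(\overline{x}) - \bar F(\overline{y}) = Q^+(F(u)-F(v))$ is $K$ identical copies of $F(u)-F(v)$, its squared product-space norm is exactly $K\|F(u)-F(v)\|^2$. Substituting $\|F(u)-F(v)\|^2 = \tfrac{1}{K}\|\bar F(\overline{x})-\bar F(\overline{y})\|_{\overline{\sX}}^2$ yields the lower bound $(\alpha/K)\|\bar F(\overline{x}) - \bar F(\overline{y})\|_{\overline{\sX}}^2$, which is precisely $(\alpha/K)$-cocoercivity.

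I do not expect a genuine obstacle here: the proof is a short chain of identities rather than an estimate requiring cleverness. The one point demanding care is bookkeeping the constant $K$ correctly — it enters only through the squared norm $\|Q^+(w)\|_{\overline{\sX}}^2 = K\|w\|^2$, and getting it on the correct side of the inequality is what turns $\alpha$-cocoercivity of $F$ into $(\alpha/K)$-cocoercivity of $\bar F$ rather than $(\alpha K)$-cocoercivity. I would therefore state the norm identity $\|Q^+(w)\|_{\overline{\sX}}^2 = K\|w\|^2$ explicitly before the final substitution so the factor is transparent.
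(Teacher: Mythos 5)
Your proposal is correct and follows essentially the same route as the paper's proof: the paper likewise expands the product-space inner product as $\sum_{k=1}^K \langle F(Q^-(\overline{x})) - F(Q^-(\overline{y})),\, \overline{x}_k - \overline{y}_k\rangle$ (which is exactly your adjoint identity for $Q^+$ and $Q^-$), applies the $\alpha$-cocoercivity of $F$ at the points $Q^-(\overline{x})$ and $Q^-(\overline{y})$, and recovers the factor $1/K$ from the norm identity $\|Q^+(w)\|_{\overline{\sX}}^2 = K\|w\|^2$. There is nothing to add or correct.
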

\begin{proof}
Fix any $\overline{x},\overline{y}\in\overline{\sX}$ and set   $R_{\overline{x}} \triangleq (F\circ Q^-)(\overline{x})$ and  $R_{\overline{y}} \triangleq (F\circ Q^-) (\overline{y})$. Then observe
\begin{subequations}
    \begin{align}
    \left< Q^+(R_{\overline{x}}) -  Q^+(R_{\overline{y}}), \overline{x} - \overline{y}\right>_{\overline{\sX}}
    &\; = \sum_{k=1}^K \left<  R_{\overline{x}} - R_{\overline{y}}, \overline{x}_k - \overline{y}_k\right> 
    \\
    &\; = \left<  R_{\overline{x}} - R_{\overline{y}},  Q^-(\overline{x}) - Q^-(\overline{y})\right> .
    \end{align}
\end{subequations}
Substituting in the definition of $R_{\overline{x}}$ and $R_{\overline{y}}$ reveals
\begin{subequations} 
\begin{align}
    \left< Q^+(R_{\overline{x}}) -  Q^+(R_{\overline{y}}), \overline{x} - \overline{y}\right>_{\overline{\sX}}
    \; & = \left< F(Q^-(\overline{x})) - F(Q^-(\overline{y})),  Q^-(\overline{x})- Q^-(\overline{y})\right>\\
    & \geq \alpha  \| F(Q^-(\overline{x})) - F(Q^-(\overline{y}))\|^2 \\
    & = \frac{\alpha}{K}\|Q^{+}\circ F\circ Q^-(\overline{x}) - Q^{+}\circ F\circ Q^-(\overline{y})\|_{\overline{\sX}}^2,
\end{align}\label{eq: F-product-space-monotone}\end{subequations}
where the final equality follows from the definition of the norm on $\overline{\sX}$.
Because \eqref{eq: F-product-space-monotone} holds for arbitrary  $\overline{x},\overline{y}\in\overline{\sX}$, the result follows.
\end{proof}

\begin{proposition}\label{lemma: minkowski-VI-equivalence}
    For  $F\colon \sX\rightarrow\sX$,
    $\overline{x}^\circ \in \mathrm{VI}\left( Q^+\circ F\circ Q^-, \sA\right)$ if and only if $   Q^-(\overline{x}^\circ) \in \mathrm{VI}\left(F,\sC\right)$.
\end{proposition}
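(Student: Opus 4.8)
The plan is to exploit the adjoint relationship between the two maps $Q^+$ and $Q^-$, together with the identity $Q^-(\sA)=\sC$ already noted above. First I would record the key computation that $Q^-$ is the adjoint of $Q^+$: for every $x\in\sX$ and $\overline{y}\in\overline{\sX}$,
\[
\langle Q^+(x),\overline{y}\rangle_{\overline{\sX}} = \sum_{k=1}^K \langle x,\overline{y}_k\rangle = \Big\langle x,\sum_{k=1}^K \overline{y}_k\Big\rangle = \langle x, Q^-(\overline{y})\rangle,
\]
which is immediate from the definitions of $Q^+$, $Q^-$ and the product inner product $\langle\cdot,\cdot\rangle_{\overline{\sX}}$. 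Consequently, for any $\overline{x},\overline{w}\in\overline{\sX}$ one has
\[
\langle (Q^+\circ F\circ Q^-)(\overline{w}),\overline{x}\rangle_{\overline{\sX}} = \langle F(Q^-(\overline{w})), Q^-(\overline{x})\rangle,
\]
so a product-space pairing against $Q^+\circ F\circ Q^-$ collapses to an $\sX$-pairing against $F$ after applying $Q^-$ to both arguments. This is the engine that drives both implications.

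For the forward implication I would suppose $\overline{x}^\circ$ solves $\mathrm{VI}(Q^+\circ F\circ Q^-,\sA)$; in particular $\overline{x}^\circ\in\sA$, so $x^\circ\triangleq Q^-(\overline{x}^\circ)\in Q^-(\sA)=\sC$. Given an arbitrary test point $x\in\sC$, surjectivity of $Q^-$ on $\sA$ (again $Q^-(\sA)=\sC$) furnishes some $\overline{x}\in\sA$ with $Q^-(\overline{x})=x$. Plugging this $\overline{x}$ into the product VI inequality and invoking the displayed adjoint identity yields $\langle F(x^\circ),x-x^\circ\rangle\geq 0$; since $x\in\sC$ was arbitrary, $x^\circ\in\mathrm{VI}(F,\sC)$. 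The converse is the same computation read backwards: assuming $\overline{x}^\circ\in\sA$ with $x^\circ=Q^-(\overline{x}^\circ)\in\mathrm{VI}(F,\sC)$, any $\overline{x}\in\sA$ pushes down to $Q^-(\overline{x})\in\sC$, and the adjoint identity turns the original VI inequality evaluated at $x^\circ$ into the product VI inequality at $\overline{x}^\circ$.

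The step I expect to require the most care is the forward direction's use of surjectivity: the test vectors for the original VI range over all of $\sC$, and each must be realized as $Q^-(\overline{x})$ for some feasible $\overline{x}\in\sA$ — this is precisely the content of $Q^-(\sA)=\sC$ and should not be glossed over. A secondary point worth stating explicitly is that feasibility $\overline{x}^\circ\in\sA$ is part of both sides of the equivalence (membership in a VI solution set includes feasibility of the candidate), which is what makes $x^\circ=Q^-(\overline{x}^\circ)\in\sC$ automatic and keeps the two directions symmetric.
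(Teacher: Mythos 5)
Your proposal is correct and follows essentially the same route as the paper's proof: the adjoint identity $\langle Q^+(x),\overline{y}\rangle_{\overline{\sX}}=\langle x,Q^-(\overline{y})\rangle$ you isolate is exactly the component-wise sum expansion the paper performs in both directions, and your use of $Q^-(\sA)=\sC$ to realize every test point $w\in\sC$ as $Q^-(\overline{y})$ with $\overline{y}\in\sA$ is the same surjectivity step the paper invokes. If anything, you are marginally more careful than the paper, whose converse direction fixes $\overline{x}^\circ\in\overline{\sX}$ rather than $\overline{x}^\circ\in\sA$; you rightly observe that feasibility $\overline{x}^\circ\in\sA$ is part of membership in the product-space VI solution set and must be carried along for the equivalence to hold as stated.
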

\begin{proof}
Fix $\overline{y} \in \sA$ and  $\overline{x}^\circ \in  \mathrm{VI}\left( Q^+\circ F\circ Q^-, \sA\right)$. Similarly to the proof of  \Cref{lemma: product-F-monotone}, observe
\begin{subequations}
    \begin{align}
    \left< ( Q^+\circ F\circ Q^-)(\overline{x}^\circ), \overline{y}-\overline{x}^\circ\right>_{\overline{\sX}}
    \; &= \sum_{k=1}^K \left<   (F\circ Q^-)(\overline{x}^\circ), \overline{y}_k - \overline{x}_k^\circ\right> \\
    &= \left<  F(Q^-(\overline{x}^\circ)) , Q^-(\overline{y})-Q^-(\overline{x}^\circ) \right>.
    \end{align}
\end{subequations}
Because $Q^-(\sA) = \sC$, it follows that $  x^\circ \triangleq Q^-(\overline{x}^\circ) \in \sC$ and $w \triangleq Q^-(\overline{y}) \in \sC$.  Consequently,  
\begin{equation}
\begin{aligned}
    0 \leq \left< ( Q^+\circ F\circ Q^-)(\overline{x}^\circ), \overline{y}-\overline{x}^\circ\right>_{\overline{\sX}}
    \: = \left< F(x^\circ), w - x^\circ\right>.
 \end{aligned}  
    \label{eq: product-scalar-product-equiv}
\end{equation} 
Because $\overline{y}$ was arbitrarily chosen, (\ref{eq: product-scalar-product-equiv}) holds for all $w \in \sC$ and, thus, $Q^-(\overline{x}^\circ) \in \mathrm{VI}\left(F,\sC\right)$.\\

Conversely, fix $\overline{y}\in \sA$ and  $\overline{x}^\circ \in \overline{\sX}$ such that $Q^{-}(\overline{x}^\circ) \in \mathrm{VI}(F,\sC)$. Then $Q^-(\overline{y}) \in \sC$ and
\begin{subequations}
    \begin{align}
    0 & \leq \left<  F(Q^-(\overline{x}^\circ)) ,  Q^{-}(\overline{y}) - Q^-(\overline{x}^\circ) \right>\\
    & =    \sum_{k=1}^K \left<  F(Q^-(\overline{x}^\circ)) ,   \overline{y}_k - \overline{x}^\circ_k \right>\\ 
    & =  \left<  (Q^+ \circ F \circ Q^-)(\overline{x}^\circ)) ,   \overline{y}  - \overline{x}^\circ  \right>_{\overline{\sX}}.  
    \end{align}
    \label{eq: product-scalar-product-equiv-2}
\end{subequations}
Together the inequality (\ref{eq: product-scalar-product-equiv-2}) and the fact $\overline{y}\in\sA$ was arbitrarily chosen imply $\overline{x}^\circ \in \mathrm{VI}(Q^+\circ F\circ Q^-,\sA)$.
This completes the proof.
\end{proof}

\begin{algorithm*}[t]
\caption{ Nash Fixed Point Network\  (Minkowski Sum Contraints $\sC = \sC_1 + \cdots + \sC_K$)}
\label{alg: N-FPN-product}
\begin{algorithmic}[1]           
    \STATE{\begin{tabular}{p{0.47\textwidth}r}
     \hspace*{-8pt}  $\sN_\Theta(d):$
     & 
     $\vartriangleleft$ Input data is $d$
     \end{tabular}}    
    
    \STATE{\begin{tabular}{p{0.47\textwidth}r}
    \hspace*{2pt}{\bf $n \leftarrow 1$}
     & 
     $\vartriangleleft$  Initialize counter
     \end{tabular}}        
    
    \STATE{\begin{tabular}{p{0.47\textwidth}r}
    \hspace*{2pt}{\bf for   $k =1,2,\ldots,K$}
     & 
     \end{tabular}}    
     
    \STATE{\begin{tabular}{p{0.47\textwidth}r}
     \hspace*{10pt}$\overline{z}_k^1 \leftarrow \hat{z}$
     & 
     $\vartriangleleft$ Initialize iterates to $\hat{z}\in\sX$   
     \end{tabular}}

    \STATE{\begin{tabular}{p{0.47\textwidth}r}
     \hspace*{2pt}{\bf while}   $ \sum_{k=1}^K \|\overline{z}_k^{n} - \overline{z}_k^{n-1}\| > \varepsilon$ {\bf or} $n=1$
     & 
     $\vartriangleleft$ Loop until convergence at fixed point
     \end{tabular}}  \\[5pt]
    
    \STATE{\begin{tabular}{p{0.47\textwidth}r}
    \hspace*{10pt}{\bf for   $k = 1,2,\ldots,K $}
     & 
     $\vartriangleleft$  Loop over constraints $\sC_k^1$
     \end{tabular}}     
    \STATE{\begin{tabular}{p{0.47\textwidth}r}
     \hspace*{18pt} $\overline{x}_k^{n+1} \leftarrow P_{\sC_k^{1}}(\overline{z}_k^n)$
     & 
     $\vartriangleleft$ Project onto constraint set
     \end{tabular}}    \\[5pt]
     
    \STATE{\begin{tabular}{p{0.47\textwidth}r}
     \hspace*{10pt}$v^{n+1} \leftarrow  \sum_{k=1}^K \overline{x}^{n+1}_k$
     & 
     $\vartriangleleft$ Combine projections
     \end{tabular}}        \\[5pt]   
     
    \STATE{\begin{tabular}{p{0.47\textwidth}r}
    \hspace*{10pt}{\bf for  $k =1,2,\ldots,K$}
     & 
     $\vartriangleleft$  Loop over constraints $\sC_k^2$
     \end{tabular}}  \\[5pt]   

    \STATE{\begin{tabular}{p{0.47\textwidth}r}
     \hspace*{18pt}$\overline{y}_k^{n+1} \leftarrow P_{\sC_k^{2}}(2\overline{x}_k^{n+1}-\overline{z}_k^n-\alpha F_\Theta(v^{n+1};d))$
     & 
     $\vartriangleleft$ Block-wise project reflected gradients
     \end{tabular}}      \\[5pt]
     
    \STATE{\begin{tabular}{p{0.47\textwidth}r}
     \hspace*{18pt}$\overline{z}_k^{n+1} \leftarrow \overline{z}_k^n -\overline{x}_k^{n+1} + \overline{y}_k^{n+1}$
     & 
     $\vartriangleleft$ Apply block-wise updates
     \end{tabular}}      \\[5pt] 
     
    \STATE{\begin{tabular}{p{0.47\textwidth}r}
     \hspace*{10pt}$n\leftarrow n+1$
     & 
     $\vartriangleleft$ Increment counter
     \end{tabular}}        \\[5pt]

    \STATE{\begin{tabular}{p{0.47\textwidth}r}
     \hspace*{2pt}{\bf return} $v^n$
     & 
     $\vartriangleleft$ Output inference 
     \end{tabular}}   
\end{algorithmic}
\end{algorithm*}    

\section{Related Works}
\label{section: related-works}
There are two distinct learning problems for games. The first considers repeated rounds of the same game and operates from the agent's perspective. The agents are assumed to have imperfect knowledge of the game, and the goal is to learn the optimal strategy ({\em i.e.} the Nash equilibrium or a coarse correlated equilibrium), given only the cost incurred in each round. This   problem is not investigated in this work, and we refer the reader to \cite{hannan1957approximation,stoltz2007learning,sessa2020contextual} for further details.

\begin{figure}[t]
    \centering 
    \begin{tabular}{ccc}
    \includegraphics[width=0.3\textwidth]{./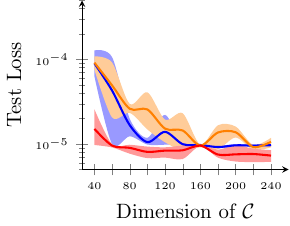}
    &
    \includegraphics[width=0.3\textwidth]{./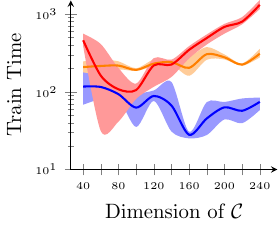} &
    \includegraphics[width=0.3\textwidth]{./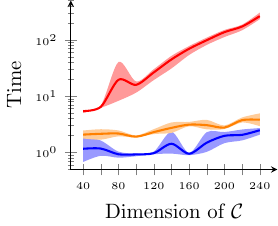}
    \\
    \end{tabular}
    \caption{\reedit{Final test loss ({\bf left}), total training time ({\bf center}) and mean training time per epoch ({\bf right}) for Payoff-Net (shown in \textcolor{red}{red}), a cocoercive N-FPN (shown in \textcolor{orange}{orange}) and an unconstrained N-FPN (shown in \textcolor{blue}{blue}). Each network is trained for 100 epochs or until a test loss less than $10^{-5}$ is achieved. The final test loss decreases as a function of $a$, which is expected since the number of parameters increases with $a$. Note that in this experiment the form of N-FPN without three-operator splitting (\ie \Cref{alg: N-FPN-Simple}) is used, and so the speed-up in train time observed is attributable to the fact that N-FPN uses fixed-point iteration for forward propagation and JFB for backward propagation, while Payoff-Net uses Newton's method on the forward pass and solves \eqref{eq: adjoint} on the backward pass.
    }}
    \label{fig: Generalized_RPS}
\end{figure}  

The second problem supposes historical observations of agents' behaviour are available to an external observer. \reedit{For example, \cite{ratliff2014social,konstantakopoulos2016smart,allen2021using} posit a simple functional form of the agents cost functions $u_k$ which depend linearly on a set of unknown parameters. Assuming a set of noisy observations of the equilibrium\footnote{some of the aforementioned work considers equilibria other than Nash, {\em e.g.} \cite{waugh2011computational} considers a correlated equilibrium while \cite{allen2021using} considers generalized Nash equilibria, leading to additional technical challenges} $x^{\star}$ is observed, a regression problem can be formulated and solved to obtain an estimate of these parameters. These works do not consider cost functions depending on the context $d$. A similar approach is pursued in \cite{waugh2011computational}, except instead of attempting to estimate the unknown parameters in the agent's costs functions, an equilibrium $x^{\circ}$ is predicted which explains the agent's behaviour for {\em all possible} values of these unknown parameters. Small-scale traffic routing problems are considered. \\ \cite{bertsimas2015data,zhang2016price,zhang2017data,zhang2018price} are important precursors to our work. Similar to us, they view the primary object of study as a variational inequality with unknown $F$. Given noisy observations of solutions to this variational inequality, \cite{bertsimas2015data} proposes a non-parametric, kernel based method for approximating $F$. This method is applied in \cite{zhang2016price,zhang2017data,zhang2018price} to non-contextual traffic routing problems on road networks discussed in Section~\ref{subsec: TrafficRouting}. While it is conceivable that this method could be extended to contextual games, to the best of the author's knowledge this has not yet been done.}

Several recent works \cite{ling2018game,ling2019large,li2020end} consider data consisting of pairs of contexts $d$ and equilibria $x_d^{\star}$ of the contextual game parameterized by $d$, and employ techniques from contemporary deep learning. Crucially, \cite{ling2018game} is the first paper to propose a differentiable game solver---which we refer to as Payoff-Net---allowing for end-to-end training of a neural network that predicts $x_d^{\star}$ given $d$. \reedit{Abstractly, the output of Payoff-Net is defined as the Nash equilibrium of the game
\begin{equation}
    \min_{x_1 \in \Delta_n}\min_{x_2 \in \Delta_n} x_1^{\top}B_{\Theta}(d)x_2 - \sum_j x_{1,j}\log(x_{1,j}) + \sum_j x_{2,j}\log(x_{2,j}),
    \label{eq:PayOff-Net}
\end{equation}
where $B_{\Theta}(d)$ is a neural network whose output is an antisymmetric $n\times n$ matrix, while $\Delta_n$ is the $n$-probability simplex (see \Cref{remark: QRE} for further discussion on the role of the entropic regularizers). The KKT conditions for \eqref{eq:PayOff-Net} are
\begin{equation}
    \begin{split}
        B_{\Theta}(d)x_2 + \log(x_1) + 1 + \mu 1 &= 0 \\
        B_{\Theta}(d)^{\top}x_1 - \log(x_2) - 1 + \nu1 &= 0 \\
        1^{\top}x_1 &= 1 \\
        1^{\top}x_2 &= 1
    \end{split}
    \label{eq:PayOff-Net_KKT}
\end{equation}
where $1$ (respectively $0$) represents the all-ones (respectively all-zeros) vector of appropriate dimension, and $\log$ is applied elementwise. The forward pass of Payoff-Net applies Newton's method to \eqref{eq:PayOff-Net_KKT}, at a cost of $\mathcal{O}(n^3)$ per iteration (see \cite{amos2017optnet} for further discussion on this complexity). Differentiating \eqref{eq:PayOff-Net_KKT} with respect to $B_{\Theta}$ yields a linear system which may be solved for $\dd{x^{\star}_d}{B_{\Theta}}$ at a cost of $\mathcal{O}(n^3)$. This is done on the backward pass of Payoff-Net. From $\dd{x^{\star}_d}{B_{\Theta}}$ one may compute $\dd{x^{\star}_d}{\Theta}$ via the chain rule. We highlight that, by construction, Payoff-Net can only be applied to two-player, zero-sum games with $\mathcal{C} = \Delta_n\times\Delta_n$.   
In \cite{ling2019large}, this approach was modified, leading to a faster backpropagation algorithm, but only for two-player, zero sum games with $\mathcal{C} = \Delta_n\times\Delta_n$ which admit a compact extensive form representation.} 
In \cite{li2020end} a differentiable variational inequality layer (VI-Layer), similar to \eqref{eq:Def_as_VI}, is proposed. Using the equivalence \eqref{eq: VI-Landweber}, they convert the problem of training this VI-Layer to that of tuning a parametrized operator $F_{\Theta}(\cdot;\cdot)$ such that
\begin{equation*}
     x_d^\star \approx P_{\sC}(x_d^\star - F_{\Theta}(x_d^\star;d)).
\end{equation*}
This idea is a significant step forward as it extends the approach of \cite{ling2018game} to unregularized games with arbitrary $\sC$ and an arbitrary number of players. \reedit{It also extends \cite{bertsimas2015data}, by connecting their approach with the techniques of deep learning.} However, as \cite{li2020end} does not use constraint decoupling (see  \Cref{thm:Use_DYS} and  \Cref{sec: Decoupling}) they are forced to use an iterative $\sO(\text{dim}(\sC)^3)$ algorithm  \cite{amos2017optnet} to compute $P_{\sC}$ (resp. $\mathrm{d}{P_{\sC}}/\mathrm{d}{z}$) in every forward (resp. backward) pass, \edit{as compared to the $\mathcal{O}(\text{dim}(\sC)^2)$ cost of N-FPN}. When $F_{\Theta}(\cdot;\cdot)$ is a multi-layer neural network, tuning $\Theta$ might require millions of forward and backward passes. Thus, their approach is impractical for games with even moderately large $\sC$ (see Section 3.3 of \cite{amos2017optnet}). \edit{Since the arXiv version of this work \cite{heaton2021learn} appeared, the use of N-FPNs for contextual traffic routing has been furthered by \cite{liu2023end}, where OD-pair (see \Cref{subsec: TrafficRouting}) specific contextual dependencies are considered.}


Our N-FPN architecture, particularly the use of operator splitting techniques, leverages insights from projection methods, which in Euclidean spaces date back to the 1930s \cite{cimmino1938cacolo,karczmarz1937angenaherte}. Projection methods are well-suited to large-scale problems as they are built from projections onto individual sets, which are often easy to compute; see \cite{censor2015projection,censor2012effectiveness} and the references therein. \edit{Finally, we note that the learning problem \eqref{eq:Empirical_Risk} is an example of a Mathematical Program with Equilibrium Constraints (MPEC) \cite{luo1996mathematical}. In this context, the difficulty of ``differentiating through'' the fixed point $z^{\circ}_d$ (see \eqref{eq: adjoint}) is well-known, and we refer the reader to \cite{li2023achieving} for further discussion on computing this derivative, as well as an alternative approach for doing so.}
 
\section{Numerical Examples}
We show the efficacy of N-FPNs on two classes of contextual games: matrix games and traffic routing.

\subsection{Contextual Matrix Games}
\label{sec:Experiment_RPS}
In \cite{ling2018game} the Payoff-Net architecture is used for a contextual ``rock-paper-scissors'' game. This is a (symmetric) matrix game where both players have action sets of dimension $3$. We extend this experiment to higher-dimensional action sets. Note \cite{ling2018game} consider {\em entropy-regularized} cost functions \footnote{equivalently: they determine the Quantal Response Equilibrium not the Nash Equilibrium, see \cref{remark: QRE}.}:
\begin{equation}
\begin{split}
    u_1(x;d) = x_2^{\top}B(d)x_1 + \sum_i x_{1,i}\log(x_{1,i}) \\
    u_2(x;d) = -x_2^{\top}B(d)x_1 + \sum_i x_{2,i}\log(x_{2,i}), 
\end{split}
\label{eq:matrix-game}
\end{equation}

\begin{wrapfigure}[14]{r}{0.5\textwidth}
    \centering 
    \includegraphics[width=0.48\textwidth]{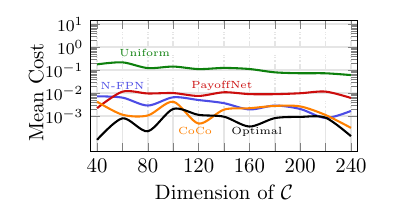} 
    \caption{Simulated play for matrix games of increasing size. Here ``N-FPN'' refers to the unconstrained variant, while ``CoCo'' refers to the cocoercive variant of N-FPN.} 
    \label{fig: RPS}
\end{wrapfigure}
\noindent for antisymmetric contextual cost matrix $B(d) \in \bbR^{a\times a}$, thus guaranteeing the game satisfies assumptions (A1)--(A5), particularly (A4). We do the same here. Each player's set of mixed strategies is the probability simplex $\Delta_a$ so $\sC = \Delta_a \times \Delta_a$. We vary $a$ \reedit{in multiples of $10$ from $20$ to $120$}. For each $a$ we generate 
a training data set $\{(d^i,x_{d^i}^\star)\}_{i=1}^{2000}$ and train a Payoff-Net, \reedit{an N-FPN constrained to be cocoercive using \eqref{eq:N-FPN-CoCo}} and an unconstrained N-FPN with comparable numbers of parameters for 100 epochs or until the test loss is below $10^{-5}$, whichever comes first. \reedit{See \Cref{app: Matrix Games} for further architectural details.} The results are presented in Figure~\ref{fig: Generalized_RPS}.\\
\reedit{Payoff-Net achieves the target test loss in much fewer epochs than (either version of) N-FPN. We attribute this to the use of Newton's method on the forward pass (which approximates the Nash equilibrium to higher precision) as well as the use of the true gradient on the backward pass. However, the time Payoff-Net requires to complete an epoch grows exponentially with the size of $\sC$ (see Figure~\ref{fig: Generalized_RPS}). Hence, for larger $\sC$ it is one to two orders of magnitude faster to train an N-FPN to the desired test loss.}

For illustration, we simulate play in the unregularized (\ie without the entropic term in \eqref{eq:matrix-game}) matrix game between two agents over a test set of contexts $d$. The first agent has full access to $B(d)$ and plays according to the computed Nash equilibrium. Four options are used for the second agent:
\begin{itemize}
    \item A N-FPN agent, who plays the strategy provided by a trained (unconstrained) N-FPN given $d$. 
    \item A Payoff-Net agent, who plays the strategy provided by a trained Payoff-Net given $d$.
    \item A data-agnostic agent, who plays the uniform strategy (\ie each action is selected with equal probability) regardless of $d$.
    \item An optimal agent, who has full access to $B(d)$ and plays according to the computed Nash equilibrium.
\end{itemize}
\reedit{We plot the absolute value of the mean cost, averaged over all $1000$ trials per $d$ and all $d$ for a given action set size $a$. The results are illustrated in  \Cref{fig: RPS}. As this game is zero-sum, the expected mean cost is zero. Over all $a$, N-FPN outperforms Payoff-Net. We attribute this to the fact that Payoff-Net explicitly incorporates the entropic regularizer into its architecture (see Section~\ref{section: related-works}), whereas the unconstrained N-FPN does not.}

\begin{figure}[t]
    \centering
    \subfloat[Rainy Day Prediction]{
    \includegraphics[width=3in]{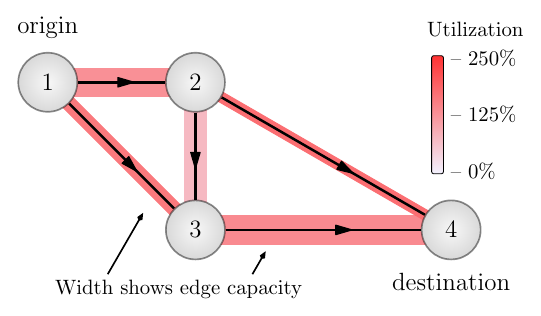}
    }
    \subfloat[Rainy Day True]{
    \includegraphics[width=3in]{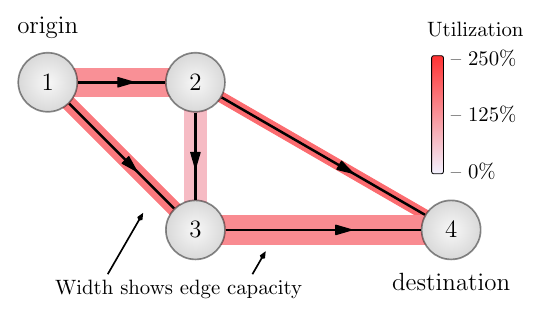}}
    
    \subfloat[Sunny Day Prediction]{
    \includegraphics[width=3in]{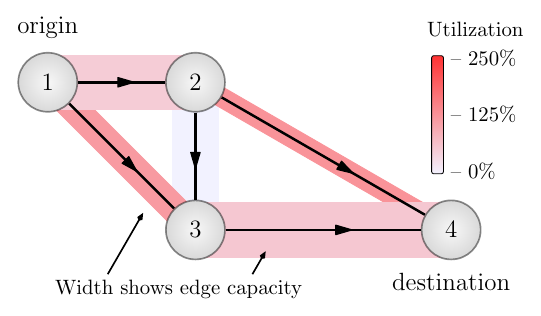}}
    \subfloat[Sunny Day True]{
    \includegraphics[width=3in]{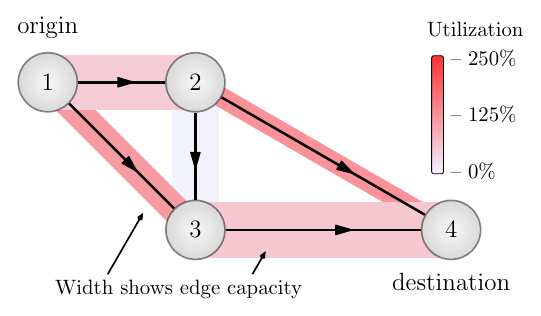}}

    \caption{(a): True traffic flow for ``rainy'' context. (b): Predicted traffic by $\sN_\Theta$ for ``rainy'' context. (c): True traffic flow for ``sunny'' context. (d): Predicted traffic by $\sN_\Theta$ for ``sunny'' context} 
    \label{fig:Braess}
\end{figure}
    
\subsection{Contextual Traffic Routing}
\label{subsec: TrafficRouting}
\paragraph{Setup} Consider a road network represented by a directed graph with vertices $V$ and arcs $E$. Let $N\in\mathbb{R}^{|V|\times |E|}$ denote the vertex-arc incidence matrix defined by
\begin{equation}
    N_{ij} \triangleq \left\{\begin{array}{cc} +1 & \text{ if } (i,j) \in E \\ -1 & \text{ if } (j,i) \in E \\ 0 & \text{ otherwise }\end{array}\right.
\end{equation}
For example, for the simple road network shown in  \Cref{fig:Braess} the incidence matrix is
\begin{equation}
    N = \begin{bmatrix} 
    -1 & 0 & -1 &  0 &  0 \\
    0 &  0 &  1 & -1 & -1 \\ 
    1 & -1 &  0 &  1 &  0 \\ 
    0 &  1 &  0 &  0 &  1 
    \end{bmatrix}.
    \label{eq:Braess_Incidence}
\end{equation} 
An origin-destination pair (OD-pair) is a triple $(v_1,v_2,q)$ with $v_{i}\in V$ and $q \in \bbR_{>0}$, encoding the constraint of routing  $q$ units of traffic from $v_1$ to $v_2$. Each OD-pair is encoded by  a vector $b \in \bbR^{|V|}$ with $b_{v_1} = -q$, $b_{v_2} = q$ and all other entries zero. A valid  {\em traffic flow} $x\in \bbR^{|E|}$ for an OD-pair has nonnegative entries  satisfying the flow equation $Nx = b$. The $e$-th entry $x_e$ represents the traffic density along the $e$-th arc. The flow equation ensures    the number of cars entering an intersection equals the number leaving, except for a net movement of $q$ units of traffic from $v_1$ to $v_2$. 
For $K$ OD-pairs, a valid traffic flow $x$ is the sum of  traffic flows for each OD-pair, which is in the Minkowski sum:
\begin{equation}
    \sC = \sum_{k=1}^K
    \sC_k \triangleq \underbrace{\{ x : Nx=b_k\}}_{\sC_k^1} \bigcap \underbrace{ \{ x: x\geq 0\}}_{\sC_k^2}, 
    \label{eq:C_for_Wardrop}
\end{equation} 
A {\em contextual travel time function} $t_{e}(x_e;d)$ is associated with each arc, where $d$ encodes contextual data. This function increases monotonically with $x_e$, reflecting the fact that increased congestion leads to longer travel times. The context $d$ encodes exogenous factors --- weather, construction and so on. 
Here the equilibrium of interest is, roughly speaking, a flow configuration $x_d^\star$ where the travel time between each OD-pair is as short as possible when taking into account congestion effects \cite{carlier2012continuous}. 
This is known as a \textit{Wardrop equilibrium} (also called the {\em user equilibrium}) \cite{wardrop1952road}, a special case of Nash equilibria where 
$
    F = [t_1(x_1;d)^\top \cdots t_{|E|}(x_{|E|};d)^\top]^\top.
$ In certain cases, a Wardrop equilibrium is the limit of a sequence of Nash equilibria as the number of drivers goes to infinity \cite{haurie1985relationship}. 

\paragraph{TRAFIX Scores} Accuracy of traffic routing predictions are measured by a {\em TRAFIX score}. This score forms an intuitive alternative to mean squared error. 
An error tolerance $\varepsilon > 0$ is chosen (\nb $\varepsilon= 5\times 10^{-3}$ in our experiments). For an estimate $x$ of $x^\star$, the TRAFIX score with parameter $\varepsilon$ is the percentage of edges for which $x$ has relative error (with  tolerance\footnote{The parameter $\tau$ is added to handle the case when the $e$-th component of $x^\star$ is zero, \ie $x_e^\star = 0$.} $\tau >0$) less than $\varepsilon$, \ie 
\begin{subequations}
    \begin{align}
    &\mbox{(rel. error of edge $e$)}  \triangleq  \dfrac{ |x_e-x^\star_e|}{|x^\star_e|+\tau } , \nonumber\\
    &\mbox{TRAFIX}(x,x^\star; \varepsilon,\tau)   \triangleq \dfrac{ {\mbox{(\# edges with rel. error $<\varepsilon$})}}{\mbox{(\#  edges)}}  \times 100\%. \nonumber
    \end{align}
\end{subequations}
Our plots and tables show the expected TRAFIX scores over the distributions of testing data. 

\begin{figure}[t]
    \centering 
    \begin{tabular}{cc}
    \includegraphics[width=.45\textwidth]{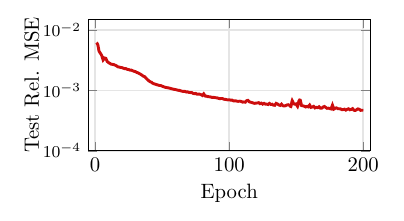} 
    &
    \includegraphics[width=.45\textwidth]{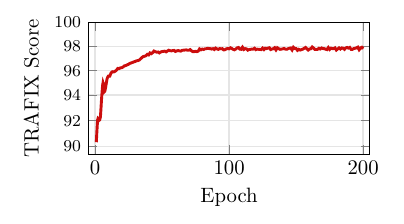} 
    \end{tabular}
    \caption{Plots for N-FPN performance on Eastern Massachusetts testing data. The first plot shows convergence of expected relative mean squared error on testing data after each training epoch and the second plot shows the expected TRAFIX score on testing data after each training epoch.}
    \label{fig: convergence_Easter-Mass}
\end{figure}

\def\ROWCOLOR{black!10!white}
\begin{table*}[htp]
    \centering 
    \small
    \begin{tabular}{c c c c}
        dataset & edges/nodes & OD-pairs & \# params 
        \\
        \toprule
        \rowcolor{\ROWCOLOR}
        Sioux Falls        
        & 76/24 
        & 528 
        & 46K 
        \\
        Eastern Mass.
        & 258/74 
        & 1113
        & 99K
        \\
        \rowcolor{\ROWCOLOR}
        Berlin-Friedrichshain     
        & 523/224
        & 506
        & 179K
        \\
        Berlin-Tiergarten
        & 766/361 
        & 644 
        & 253K 
        \\
        \rowcolor{\ROWCOLOR}
        Anaheim
        & 914/416
        & 1406
        & 307K
        \\
        \edit{
        Chicago-Sketch}
        &
        2950/933
        &
        93513
        &
        457K 
    \end{tabular}
    \vspace{0pt}
    \caption{Datasets used. First and second columns show the number of edges, nodes, and origin-destination pairs for corresponding dataset.
    Second column shows the number of tunable parameters.
    Further details may be found in  supplementary materials.
    }
    \label{tab: trafficNetResults}
\end{table*}

\begin{table}[htp]
    \centering 
    \small
    \begin{tabular}{c cc cc}
        & \multicolumn{2}{c}{MSE} & \multicolumn{2}{c}{TRAFIX}\\ 
        dataset & N-FPN &  feedforward &  N-FPN & feedforward
        \\
        \toprule
        \rowcolor{\ROWCOLOR}
        Sioux Falls
        &$1.9 \times 10^{-3}$ 
        & $5.4 \times 10^{-3} $
        & 94.42\% 
        & 70.16\%
        \\
        Eastern Mass.
        & $4.7 \times 10^{-4}$ 
        & $4.1 \times 10^{-3} $
        & 97.94\% 
        & 92.70\%
        \\
        \rowcolor{\ROWCOLOR}
        Berlin-Friedrichshain 
        & $5.3 \times 10^{-4}$
        & $9.3 \times 10^{-4} $
        & 97.42\% 
        & 97.94\%
        \\
        Berlin-Tiergarten 
        & $7.6 \times 10^{-4}$
        & $5.5 \times 10^{-4}$
        & 95.95\%
        & 97.03\%
        \\
        \rowcolor{\ROWCOLOR}
        Anaheim
        & $2.4 \times 10^{-3}$
        & $5.1 \times 10^{-2} $
        & 95.28\%
        & 58.57\%
        \\
        Chicago-Sketch
        & $2.5 \times 10^{-3}$
        & $3.1 \times 10^{-3}$
        & 98.81\%
        & 97.12\%
    \end{tabular}
    \caption{Results of Traffic Routing Experiments. \edit{To benchmark our results, we provide comparison with a traditional neural network architecture. To make a fair comparison, we use the same architecture for $F_\Theta$ in N-FPN and the feedforward neural network.}}
\end{table}

\paragraph{Datasets and Training} 
We are unaware of any prior datasets for contextual traffic routing, and so we construct our own. First, we construct a toy example based on the ``Braess paradox'' network studied in \cite{li2020end}, illustrated in \Cref{fig:Braess}. Here $d\in \mathbb{R}^5$; 
see supplementary materials for  further details.

We also constructed contextual traffic routing data sets based on road networks of real-world cities curated by the Transportation Networks for Research Project~\cite{traffixnet}. We did so by fixing a choice of $t_{e}(x;d)$ for each arc $e$, randomly generating a large set of contexts $d \in [0,1]^{10}$ and then, for each $d$, finding a solution $x_d^\circ \in$ \VI. \Cref{tab: trafficNetResults} shows a description of the traffic networks datasets, including the numbers of edges, nodes, and OD-pairs. 
Further details  are in the supplementary materials. 
We emphasize that for these contextual games the structure of $\sC$ is complex; it is a Minkowski sum of hundreds of high-dimensional polytopes (recall \Cref{eq:C_for_Wardrop}).  
We train an N-FPN using the constraint decoupling described in \Cref{sec: Decoupling} for forward propagation 
(see \Cref{alg: N-FPN-product}) 
to predict $x^{\star}_d$ from $d$ for each data set with architectures as described in
the appendix.
Additional training details  are in the appendix.
\edit{For comparison, we also train a traditional feedforward neural network. We use the same architecture used to parameterize the game gradient in N-FPN and use the same number of epochs during training. We perform a logarithmic search when tuning the learning rate.}

\paragraph{Results}  
As illustrated in \Cref{fig:Braess}, the N-FPN almost perfectly predicts the resulting Wardrop equilibrium given only the context $d$. The results for the real-world networks are shown in the final two columns of \Cref{tab: trafficNetResults}. The convergence during training of the relative MSE and TRAFIX score on the Eastern-Massachusetts testing dataset is shown in  \Cref{fig: convergence_Easter-Mass}. 
Additional plots can be found in  the supplementary materials.

\section{Conclusions} 
The fusion of big data and optimization algorithms offers potential for predicting equilibria in  systems with many interacting agents.
The proposed N-FPNs  form  a \textit{scalable} data-driven framework for efficiently predicting equilibria for such systems that can be modeled as contextual games. The N-FPN architecture yields equilibria outputs that satisfy constraints while also being trained end-to-end.  Moreover, the provided constraint decoupling schemes  enable  simple forward and backward propagation using explicit formulae for each projection.
The   efficacy of  N-FPNs is illustrated on large-scale traffic routing problems using a  contextual traffic routing benchmark dataset and  TRAFIX scoring system. Although we focus here on games, we note that N-FPNs are equally applicable to any system modeled using a variational inequality (or equivalently a linear complementarity problem), for example convex optimization \cite{amos2017optnet,wilder2019melding} or physical simulation \cite{de2018end}. Future work shall focus on end-to-end learning for these domains using N-FPN.

\section*{Acknowledgments}
HH, DM, SO, SWF and QL were supported by AFOSR
MURI \\ FA9550-18-1-0502 and ONR grants: N00014-18-
1-2527, N00014-20-1-2093, and N00014-20-1-2787. HH’s
work was also supported by the National Science Foundation (NSF) Graduate Research Fellowship under Grant No.
DGE-1650604. 
SWF was also supported in part by National Science Foundation award DMS-2309810 and DMS-2110745.
Any opinion, findings, and conclusions or
recommendations expressed in this material are those of the
authors and do not necessarily reflect the views of the NSF.

\newpage
\bibliographystyle{siamplain}
\bibliography{references}

\begin{thebibliography}{10}

\bibitem{agrawal2019differentiable}
{\sc A.~Agrawal, B.~Amos, S.~Barratt, S.~Boyd, S.~Diamond, and J.~Z. Kolter},
  {\em Differentiable convex optimization layers}, Advances in neural
  information processing systems, 32 (2019).

\bibitem{allen2021using}
{\sc S.~Allen, J.~P. Dickerson, and S.~A. Gabriel}, {\em Using inverse
  optimization to learn cost functions in generalized {N}ash games}, arXiv
  preprint arXiv:2102.12415,  (2021).

\bibitem{amos2017optnet}
{\sc B.~Amos and J.~Z. Kolter}, {\em Optnet: Differentiable optimization as a
  layer in neural networks}, in International Conference on Machine Learning,
  PMLR, 2017, pp.~136--145.

\bibitem{amos2017input}
{\sc B.~Amos, L.~Xu, and J.~Z. Kolter}, {\em Input convex neural networks}, in
  International Conference on Machine Learning, PMLR, 2017, pp.~146--155.

\bibitem{arrow1954existence}
{\sc K.~J. Arrow and G.~Debreu}, {\em Existence of an equilibrium for a
  competitive economy}, Econometrica: Journal of the Econometric Society,
  (1954), pp.~265--290.

\bibitem{azar2011soccer}
{\sc O.~H. Azar and M.~Bar-Eli}, {\em Do soccer players play the mixed-strategy
  nash equilibrium?}, Applied Economics, 43 (2011), pp.~3591--3601.

\bibitem{bai2019deep}
{\sc S.~Bai, J.~Z. Kolter, and V.~Koltun}, {\em Deep equilibrium models}, in
  Advances in Neural Information Processing Systems, 2019, pp.~690--701.

\bibitem{bai2020multiscale}
{\sc S.~Bai, V.~Koltun, and J.~Z. Kolter}, {\em Multiscale deep equilibrium
  models}, Advances in Neural Information Processing Systems, 33 (2020).

\bibitem{baillon1977quelques}
{\sc J.-B. Baillon and G.~Haddad}, {\em Quelques propri{\'e}t{\'e}s des
  op{\'e}rateurs angle-born{\'e}s et n-cycliquement monotones}, Israel Journal
  of Mathematics, 26 (1977), pp.~137--150.

\bibitem{bauschke2010baillon}
{\sc H.~H. Bauschke and P.~L. Combettes}, {\em The baillon-haddad theorem
  revisited}, Journal of Convex Analysis, 17 (2010), pp.~781--787.

\bibitem{bauschke2017convex}
{\sc H.~H. Bauschke, P.~L. Combettes, et~al.}, {\em Convex Analysis and
  Monotone Operator Theory in Hilbert Spaces}, Springer, 2nd~ed., 2017.

\bibitem{bertsimas2015data}
{\sc D.~Bertsimas, V.~Gupta, and I.~C. Paschalidis}, {\em Data-driven
  estimation in equilibrium using inverse optimization}, Mathematical
  Programming, 153 (2015), pp.~595--633.

\bibitem{bisong2019google}
{\sc E.~Bisong}, {\em Google colaboratory}, in Building Machine Learning and
  Deep Learning Models on Google Cloud Platform, Springer, 2019, pp.~59--64.

\bibitem{carlier2012continuous}
{\sc G.~Carlier and F.~Santambrogio}, {\em A continuous theory of traffic
  congestion and {W}ardrop equilibria}, Journal of Mathematical Sciences, 181
  (2012), pp.~792--804.

\bibitem{cegielski2012iterative}
{\sc A.~Cegielski}, {\em Iterative methods for fixed point problems in Hilbert
  spaces}, vol.~2057, Springer, Berlin, Germany, 2012.

\bibitem{censor2015projection}
{\sc Y.~Censor and A.~Cegielski}, {\em {Projection Methods: An Annotated
  Bibliography of Books and Reviews}}, Optimization, 64 (2015), pp.~2343--2358,
  \url{https://doi.org/10.1080/02331934.2014.957701}.

\bibitem{censor2012effectiveness}
{\sc Y.~Censor, W.~Chen, P.~L. Combettes, R.~Davidi, and G.~T. Herman}, {\em On
  the effectiveness of projection methods for convex feasibility problems with
  linear inequality constraints}, Computational Optimization and Applications,
  51 (2012), pp.~1065--1088.

\bibitem{chen2021learning}
{\sc T.~Chen, X.~Chen, W.~Chen, H.~Heaton, J.~Liu, Z.~Wang, and W.~Yin}, {\em
  Learning to optimize: A primer and a benchmark}, arXiv preprint
  arXiv:2103.12828,  (2021).

\bibitem{cimmino1938cacolo}
{\sc G.~Cimmino}, {\em Cacolo approssimato per le soluzioni dei systemi di
  equazioni lineari}, La Ricerca Scientifica (Roma), 1 (1938), pp.~326--333.

\bibitem{dafermos1988sensitivity}
{\sc S.~Dafermos}, {\em Sensitivity analysis in variational inequalities},
  Mathematics of Operations Research, 13 (1988), pp.~421--434.

\bibitem{davis2017three}
{\sc D.~Davis and W.~Yin}, {\em A three-operator splitting scheme and its
  optimization applications}, Set-valued and variational analysis, 25 (2017),
  pp.~829--858.

\bibitem{de2018end}
{\sc F.~de~Avila Belbute-Peres, K.~Smith, K.~Allen, J.~Tenenbaum, and J.~Z.
  Kolter}, {\em End-to-end differentiable physics for learning and control},
  Advances in neural information processing systems, 31 (2018), pp.~7178--7189.

\bibitem{facchinei2007finite}
{\sc F.~Facchinei and J.-S. Pang}, {\em Finite-dimensional variational
  inequalities and complementarity problems}, Springer Science \& Business
  Media, 2007.

\bibitem{fung2022jfb}
{\sc S.~W. Fung, H.~Heaton, Q.~Li, D.~McKenzie, S.~Osher, and W.~Yin}, {\em
  Jfb: {Jacobian-free Backpropagation for Implicit Networks}}, Proceedings of
  the AAAI Conference on Artificial Intelligence,  (2022).

\bibitem{geng2021training}
{\sc Z.~Geng, X.-Y. Zhang, S.~Bai, Y.~Wang, and Z.~Lin}, {\em On training
  implicit models}, Advances in Neural Information Processing Systems, 34
  (2021), pp.~24247--24260.

\bibitem{ghaoui2019implicit}
{\sc L.~E. Ghaoui, F.~Gu, B.~Travacca, A.~Askari, and A.~Y. Tsai}, {\em
  Implicit deep learning}, arXiv preprint arXiv:1908.06315,  (2019).

\bibitem{gilton2021deep}
{\sc D.~Gilton, G.~Ongie, and R.~Willett}, {\em Deep equilibrium architectures
  for inverse problems in imaging}, arXiv preprint arXiv:2102.07944,  (2021).

\bibitem{golub2013matrix}
{\sc G.~H. Golub and C.~F. Van~Loan}, {\em {Matrix Computations}}, Johns
  Hopkins University Press, 2013.

\bibitem{hannan1957approximation}
{\sc J.~Hannan}, {\em Approximation to {B}ayes risk in repeated play},
  Contributions to the Theory of Games, 21 (1957), p.~97.

\bibitem{haurie1985relationship}
{\sc A.~Haurie and P.~Marcotte}, {\em On the relationship between
  nash—cournot and wardrop equilibria}, Networks, 15 (1985), pp.~295--308.

\bibitem{heaton2022explainable}
{\sc H.~Heaton and S.~W. Fung}, {\em Explainable ai via learning to optimize},
  arXiv preprint arXiv:2204.14174,  (2022).

\bibitem{heaton2021feasibility}
{\sc H.~Heaton, S.~W. Fung, A.~Gibali, and W.~Yin}, {\em Feasibility-based
  fixed point networks}, arXiv preprint arXiv:2104.14090,  (2021).

\bibitem{heaton2022wasserstein}
{\sc H.~Heaton, S.~W. Fung, A.~T. Lin, S.~Osher, and W.~Yin}, {\em
  Wasserstein-based projections with applications to inverse problems}, SIAM
  Journal on Mathematics of Data Science, 4 (2022), pp.~581--603.

\bibitem{heaton2021learn}
{\sc H.~Heaton, D.~McKenzie, Q.~Li, S.~W. Fung, S.~Osher, and W.~Yin}, {\em
  Learn to predict equilibria via fixed point networks}, arXiv preprint
  arXiv:2106.00906,  (2021).

\bibitem{heaton2021learning}
{\sc H.~W. Heaton}, {\em Learning to Optimize with Guarantees}, PhD thesis,
  University of California, Los Angeles, 2021.

\bibitem{jahn2005system}
{\sc O.~Jahn, R.~H. M{\"o}hring, A.~S. Schulz, and N.~E. Stier-Moses}, {\em
  System-optimal routing of traffic flows with user constraints in networks
  with congestion}, Operations research, 53 (2005), pp.~600--616.

\bibitem{karczmarz1937angenaherte}
{\sc S.~Karczmarz}, {\em Angenaherte auflosung von systemen linearer
  glei-chungen}, Bull. Int. Acad. Pol. Sic. Let., Cl. Sci. Math. Nat.,  (1937),
  pp.~355--357.

\bibitem{kidger2020universal}
{\sc P.~Kidger and T.~Lyons}, {\em Universal approximation with deep narrow
  networks}, in Conference on learning theory, PMLR, 2020, pp.~2306--2327.

\bibitem{kingma2015adam}
{\sc D.~P. Kingma and J.~Ba}, {\em Adam: A method for stochastic optimization},
  in ICLR (Poster), 2015.

\bibitem{konstantakopoulos2016smart}
{\sc I.~C. Konstantakopoulos, L.~J. Ratliff, M.~Jin, C.~Spanos, and S.~S.
  Sastry}, {\em Smart building energy efficiency via social game: a robust
  utility learning framework for closing--the--loop}, in 2016 1st International
  Workshop on Science of Smart City Operations and Platforms Engineering
  (SCOPE) in partnership with Global City Teams Challenge (GCTC)(SCOPE-GCTC),
  IEEE, 2016, pp.~1--6.

\bibitem{kotary2021end}
{\sc J.~Kotary, F.~Fioretto, P.~Van~Hentenryck, and B.~Wilder}, {\em End-to-end
  constrained optimization learning: A survey}, arXiv preprint
  arXiv:2103.16378,  (2021).

\bibitem{koyama2022music}
{\sc Y.~Koyama, N.~Murata, S.~Uhlich, G.~Fabbro, S.~Takahashi, and
  Y.~Mitsufuji}, {\em Music source separation with deep equilibrium models}, in
  ICASSP 2022-2022 IEEE International Conference on Acoustics, Speech and
  Signal Processing (ICASSP), IEEE, 2022, pp.~296--300.

\bibitem{krantz2012implicit}
{\sc S.~G. Krantz and H.~R. Parks}, {\em The implicit function theorem:
  history, theory, and applications}, Springer Science \& Business Media, 2012.

\bibitem{krasnoselskii1955two}
{\sc M.~Krasnosel'ski\u{\i}}, {\em Two remarks about the method of successive
  approximations}, Uspekhi Mat. Nauk, 10 (1955), pp.~123--127.

\bibitem{leblanc1975efficient}
{\sc L.~J. LeBlanc, E.~K. Morlok, and W.~P. Pierskalla}, {\em An efficient
  approach to solving the road network equilibrium traffic assignment problem},
  Transportation research, 9 (1975), pp.~309--318.

\bibitem{li2023achieving}
{\sc J.~Li, J.~Yu, B.~Liu, Z.~Wang, and Y.~M. Nie}, {\em Achieving
  hierarchy-free approximation for bilevel programs with equilibrium
  constraints}, arXiv preprint arXiv:2302.09734,  (2023).

\bibitem{li2020end}
{\sc J.~Li, J.~Yu, Y.~Nie, and Z.~Wang}, {\em End-to-end learning and
  intervention in games}, Advances in Neural Information Processing Systems, 33
  (2020).

\bibitem{ling2018game}
{\sc C.~K. Ling, F.~Fang, and J.~Z. Kolter}, {\em What game are we playing?
  end-to-end learning in normal and extensive form games}, arXiv preprint
  arXiv:1805.02777,  (2018).

\bibitem{ling2019large}
{\sc C.~K. Ling, F.~Fang, and J.~Z. Kolter}, {\em Large scale learning of agent
  rationality in two-player zero-sum games}, in Proceedings of the AAAI
  Conference on Artificial Intelligence, vol.~33, 2019, pp.~6104--6111.

\bibitem{liu2022inducing}
{\sc B.~Liu, J.~Li, Z.~Yang, H.-T. Wai, M.~Hong, Y.~Nie, and Z.~Wang}, {\em
  Inducing equilibria via incentives: Simultaneous design-and-play ensures
  global convergence}, Advances in Neural Information Processing Systems, 35
  (2022), pp.~29001--29013.

\bibitem{liu2023end}
{\sc Z.~Liu, Y.~Yin, F.~Bai, and D.~K. Grimm}, {\em End-to-end learning of user
  equilibrium with implicit neural networks}, Transportation Research Part C:
  Emerging Technologies, 150 (2023), p.~104085.

\bibitem{luo1996mathematical}
{\sc Z.-Q. Luo, J.-S. Pang, and D.~Ralph}, {\em Mathematical programs with
  equilibrium constraints}, Cambridge University Press, 1996.

\bibitem{mann1953mean}
{\sc R.~Mann}, {\em Mean {Value} {Methods} in {Iteration}}, 4 (1953),
  pp.~506--510.

\bibitem{marcotte1995convergence}
{\sc P.~Marcotte and J.~H. Wu}, {\em On the convergence of projection methods:
  application to the decomposition of affine variational inequalities}, Journal
  of Optimization Theory and Applications, 85 (1995), pp.~347--362.

\bibitem{mckelvey1995quantal}
{\sc R.~D. McKelvey and T.~R. Palfrey}, {\em Quantal response equilibria for
  normal form games}, Games and economic behavior, 10 (1995), pp.~6--38.

\bibitem{mckenzie2023faster}
{\sc D.~McKenzie, S.~W. Fung, and H.~Heaton}, {\em Faster predict-and-optimize
  with three-operator splitting}, arXiv preprint arXiv:2301.13395,  (2023).

\bibitem{mertikopoulos2016learning}
{\sc P.~Mertikopoulos and W.~H. Sandholm}, {\em Learning in games via
  reinforcement and regularization}, Mathematics of Operations Research, 41
  (2016), pp.~1297--1324.

\bibitem{miyato2018spectral}
{\sc T.~Miyato, T.~Kataoka, M.~Koyama, and Y.~Yoshida}, {\em Spectral
  normalization for generative adversarial networks}, arXiv preprint
  arXiv:1802.05957,  (2018).

\bibitem{nash1950equilibrium}
{\sc J.~F. Nash}, {\em Equilibrium points in n-person games}, Proceedings of
  the national academy of sciences, 36 (1950), pp.~48--49.

\bibitem{pedregosa2018adaptive}
{\sc F.~Pedregosa and G.~Gidel}, {\em Adaptive three operator splitting}, in
  International Conference on Machine Learning, PMLR, 2018, pp.~4085--4094.

\bibitem{pesquet2021learning}
{\sc J.-C. Pesquet, A.~Repetti, M.~Terris, and Y.~Wiaux}, {\em Learning
  maximally monotone operators for image recovery}, SIAM Journal on Imaging
  Sciences, 14 (2021), pp.~1206--1237.

\bibitem{ramzi2022shine}
{\sc Z.~Ramzi, F.~Mannel, S.~Bai, J.-L. Starck, P.~Ciuciu, and T.~Moreau}, {\em
  Shine: Sharing the inverse estimate from the forward pass for bi-level
  optimization and implicit models}, in ICLR 2022-International Conference on
  Learning Representations, 2022.

\bibitem{ratliff2014social}
{\sc L.~J. Ratliff, M.~Jin, I.~C. Konstantakopoulos, C.~Spanos, and S.~S.
  Sastry}, {\em Social game for building energy efficiency: Incentive design},
  in 2014 52nd Annual Allerton Conference on Communication, Control, and
  Computing (Allerton), IEEE, 2014, pp.~1011--1018.

\bibitem{rockafellar1970convex}
{\sc R.~T. Rockafellar}, {\em Convex analysis}, vol.~36, Princeton university
  press, 1970.

\bibitem{romano2017little}
{\sc Y.~Romano, M.~Elad, and P.~Milanfar}, {\em The little engine that could:
  Regularization by denoising (red)}, SIAM Journal on Imaging Sciences, 10
  (2017), pp.~1804--1844.

\bibitem{rosen1965existence}
{\sc J.~B. Rosen}, {\em Existence and uniqueness of equilibrium points for
  concave n-person games}, Econometrica: Journal of the Econometric Society,
  (1965), pp.~520--534.

\bibitem{roughgarden2007routing}
{\sc T.~Roughgarden}, {\em Routing games}, Algorithmic game theory, 18 (2007),
  pp.~459--484.

\bibitem{ryu2022large}
{\sc E.~Ryu and W.~Yin}, {\em Large-Scale Convex Optimization: Algorithm
  Designs via Monotone Operators}, Cambridge University Press, Cambridge,
  England, 2022.

\bibitem{salimans2021should}
{\sc T.~Salimans and J.~Ho}, {\em Should ebms model the energy or the score?},
  in Energy Based Models Workshop-ICLR 2021, 2021.

\bibitem{sessa2020contextual}
{\sc P.~G. Sessa, I.~Bogunovic, A.~Krause, and M.~Kamgarpour}, {\em Contextual
  games: Multi-agent learning with side information}, Advances in Neural
  Information Processing Systems, 33 (2020).

\bibitem{traffixnet}
{\sc B.~Stabler, H.~Bar-Gera, and E.~Sall}, {\em Transportation networks for
  research.}
\newblock \url{https://github.com/bstabler/TransportationNetworks.}, 2016.
\newblock Accessed: 2021-05-24.

\bibitem{stoltz2007learning}
{\sc G.~Stoltz and G.~Lugosi}, {\em Learning correlated equilibria in games
  with compact sets of strategies}, Games and Economic Behavior, 59 (2007),
  pp.~187--208.

\bibitem{vapnik1999nature}
{\sc V.~Vapnik}, {\em The nature of statistical learning theory}, Springer
  science \& business media, 1999.

\bibitem{wardrop1952road}
{\sc J.~G. Wardrop}, {\em Some theoretical aspects of road traffic research.},
  Proceedings of the institution of civil engineers, 1 (1952), pp.~325--362.

\bibitem{waugh2011computational}
{\sc K.~Waugh, B.~D. Ziebart, and J.~A. Bagnell}, {\em Computational
  rationalization: the inverse equilibrium problem}, in Proceedings of the 28th
  International Conference on International Conference on Machine Learning,
  2011, pp.~1169--1176.

\bibitem{wilder2019melding}
{\sc B.~Wilder, B.~Dilkina, and M.~Tambe}, {\em Melding the data-decisions
  pipeline: Decision-focused learning for combinatorial optimization}, in
  Proceedings of the AAAI Conference on Artificial Intelligence, vol.~33, 2019,
  pp.~1658--1665.

\bibitem{winston2020monotone}
{\sc E.~Winston and J.~Z. Kolter}, {\em Monotone operator equilibrium
  networks}, in Advances in Neural Information Processing Systems,
  H.~Larochelle, M.~Ranzato, R.~Hadsell, M.~F. Balcan, and H.~Lin, eds.,
  vol.~33, Curran Associates, Inc., 2020, pp.~10718--10728,
  \url{https://proceedings.neurips.cc/paper/2020/file/798d1c2813cbdf8bcdb388db0e32d496-Paper.pdf}.

\bibitem{zhang2017data}
{\sc J.~Zhang and I.~C. Paschalidis}, {\em Data-driven estimation of travel
  latency cost functions via inverse optimization in multi-class transportation
  networks}, in 2017 IEEE 56th Annual Conference on Decision and Control (CDC),
  IEEE, 2017, pp.~6295--6300.

\bibitem{zhang2016price}
{\sc J.~Zhang, S.~Pourazarm, C.~G. Cassandras, and I.~C. Paschalidis}, {\em The
  price of anarchy in transportation networks by estimating user cost functions
  from actual traffic data}, in 2016 IEEE 55th Conference on Decision and
  Control (CDC), IEEE, 2016, pp.~789--794.

\bibitem{zhang2018price}
{\sc J.~Zhang, S.~Pourazarm, C.~G. Cassandras, and I.~C. Paschalidis}, {\em The
  price of anarchy in transportation networks: Data-driven evaluation and
  reduction strategies}, Proceedings of the IEEE, 106 (2018), pp.~538--553.

\bibitem{zhangmultiset}
{\sc Y.~Zhang, D.~W. Zhang, S.~Lacoste-Julien, G.~J. Burghouts, and C.~G.
  Snoek}, {\em Multiset-equivariant set prediction with approximate implicit
  differentiation}, in International Conference on Learning Representations.

\end{thebibliography}

\appendix

\maketitle
\section{Proofs of Theorems}
\label{app: universal_approx_proof}
We provide the proofs of several theorems omitted from the main text.
For ease of reference, we restate each theorem before its proof. \\

\noindent Theorem 4.1 ({\sc Universal Approximation}).
\textit{\thmUniversalApproximation}

\begin{proof}
Let  $\varepsilon > 0$ be given. 
Denote the map $d \mapsto x_d^{\star}$ by $\mathcal{L}$, {\em i.e.} $\mathcal{L}(d) \triangleq x_d^{\star}$.  By  \Cref{thm:Lipschitz_Equib}, $\mathcal{L}$ is well-defined and Lipschitz continuous. Combined with the compactness of $\sD$ via (A5), this implies, by standard universal approximation properties of neural networks \cite{kidger2020universal},   there exists a continuous neural network $G_{\Theta}:\sD\rightarrow\sX$ such that
\begin{equation}
    \max_{{d}\in\sD} \|\mathcal{L}({d}) -  G_{\Theta}({d})\|_2 \leq \frac{\varepsilon}{2}. 
    \label{eq: universal-approx-proof-01}
\end{equation}
Next fix $\alpha > 0$ and define the operator $F_\Theta\colon \sX\times\sD\rightarrow\sX$ by
\begin{equation}
    F_\Theta(x;d) \triangleq \dfrac{x - G_\Theta(d) }{\alpha}.
     \label{eq: universal-approx-proof-02}
\end{equation}
and recall the corresponding N-FPN is defined as
\begin{equation}
    \sN_\Theta(d) \triangleq \mathrm{VI}(F_\Theta(\cdot;d). \sC).
\end{equation}
Note $F_\Theta$ is continuous by the continuity of $G_\Theta$, and so the VI and fixed point equivalence \eqref{eq: VI-Landweber} implies, for any $\zeta \in\sD$,
\begin{equation}
    \sN_{\Theta}(\zeta) = P_{\sC}(x_\zeta^\circ - \alpha F_{\Theta}(x_\zeta^\circ;d))
    = P_{\sC}(G_\Theta(\zeta)).
     \label{eq: universal-approx-proof-03}
\end{equation}
By definition of the projection $P_{\sC}$, 
\begin{equation}
    \|P_{\sC}(G_\Theta(\zeta)) - G_\Theta(\zeta)\|_2 = \min_{x\in\sC} \|x-  G_\Theta(\zeta)\|_2, 
     \label{eq: universal-approx-proof-04}
\end{equation}
which implies, since $\mathcal{L}(\zeta) \in \sC$,
\begin{equation}
  \| P_{\sC}(G_{\Theta}(\zeta)) -  G_{\Theta}(\zeta)\|_2 \leq  \|\mathcal{L}(\zeta) - G_{\Theta}(\zeta)\|_2 .
   \label{eq: universal-approx-proof-05}
\end{equation}
Together with the triangle inequality, \eqref{eq: universal-approx-proof-01} and \eqref{eq: universal-approx-proof-05} yield 
\begin{subequations}
    \begin{align}
        \|x_\zeta^{\star} - \sN_{\Theta}(\zeta)\|_2 
        & = \|\mathcal{L}(\zeta) - P_{\sC}(G_\Theta(\zeta))\|_2 \\
        & \leq   \|\mathcal{L}(\zeta) - {G}_{\Theta}(\zeta)\|_2 + \|   {G}_{\Theta}(\zeta)  - P_{\sC}({G}_{\Theta}(\zeta)) \|_2\\ 
        & \leq 2 \|\mathcal{L}(\zeta) - {G}_{\Theta}(\zeta)\|_2 = \varepsilon         
    \end{align}\label{eq: universal-approx-proof-06}\end{subequations}Since \eqref{eq: universal-approx-proof-06} holds for arbitrarily chosen $\zeta\in\sD$, we deduce that $\displaystyle \max_{d \in \sD}\|x_{d}^{\star} - \sN_{\Theta}(d)\|_2 \leq \varepsilon$ holds for the provided $\varepsilon$. As $\varepsilon > 0$ was also arbitrarily chosen, the result follows.
\end{proof}

\noindent The next result verifies the DYS scheme yields a convergent sequence.\\ 

\noindent {Theorem 4.3.} 
\textit{\thmDysConvergence} \\ 

\begin{proof}
    We proceed in the following manner.
    First we show the sequence $\{x^k\}$ converges to the desired limit (Step 1) and the residual $\|x^k - x^{k-1}\|$ drops below $\epsilon$ after $\sO(\epsilon^{-2})$ iterations (Step 2). 
    Then we show the per-iteration complexity is $\sO(\mbox{dim}(\sC)^2)$ (Step 3). Multiplying the bounds in Steps 2 and 3 yields the desired result. 
    
    \textit{Step 1.} Because all sets considered in this work are closed and convex and nonempty, the projection operators $P_{\sC^1}$ and $P_{\sC^2}$ are averaged \cite[Theorem 2.2.21]{cegielski2012iterative}. 
    Combined with the fact that $F_\Theta$ is $\alpha$-cocoercive, while $\gamma = \alpha$ the operator  $T$ is $\frac{2}{3}$-averaged \cite[Proposition 2.1]{davis2017three}.
    The classic result of Krasnosel'ski\u{\i} \cite{krasnoselskii1955two} and Mann \cite{mann1953mean}   asserts if, given any $z^1$, a sequence $\{z^k\}$ is generated using updates of the form $z^{k+1} = T(z^k)$ for an averaged operator $T$, then $\{z^k\}$ converges to a fixed point of $T$. Thus, $z^k \rightarrow z^\circ \in \Omega_d \triangleq \{ z :  z = T_\Theta(z;d)\}\neq \varnothing$.
    Because the projection operator is 1-Lipschitz, it necessarily follows that $\{P_{\sC^1}(z^k)\}$ converges to $P_{\sC^1}(z^\circ)$. By our VI equivalence lemma, $P_{\sC^1}(z^\circ) \in \mathrm{VI}(F(\cdot;d), \sC)$.
    \textit{Step 2.}
    Define $\mbox{dist}(z, \Omega_d) \triangleq \inf\{\|z-\overline{z}\| : \overline{z}\in \Omega_d \}$.
    By Theorem 1 in \cite{ryu2022large} and the $\frac{2}{3}$-averagedness of $T_\Theta$,
    \begin{equation}
        \|z^{k} - z^{k-1}\|^2
        \leq \dfrac{\frac{2}{3}}{k(1-\frac{2}{3})} \cdot \mbox{dist}(z^1, \Omega_d)^2 
        = \dfrac{2}{k} \cdot \mbox{dist}(z^1, \Omega_d)^2,
        \quad \mbox{for all}\ k\geq 2.
    \end{equation}
    Since projection operators are $1$-Lipschitz \cite{bauschke2017convex}, it follows that
    \begin{equation}
        \|x^k - x^{k-1}\|^2
        \leq \|z^{k} - z^{k-1}\|^2 
        \leq \dfrac{2}{k} \cdot \mbox{dist}(z^1, \Omega_d)^2 ,
        \quad \mbox{for all}\ k\geq 2.
    \end{equation}
    Thus, 
    \begin{equation} 
        \|x^k - x^{k-1}\| \leq \epsilon,
        \quad \mbox{for all}\ 
        k \geq  \max\left\lbrace 2,\ \dfrac{2\cdot \mbox{dist}(z^1,\Omega_d)^2}{\epsilon^2}\right\rbrace .
    \end{equation}

    \textit{Step 3.} For per-iteration complexity, the projection taking the form of a ReLU has computational cost $\sO(\mbox{dim}(\sC))$ as each element of $z^k$ has an element-wise max operation applied. The cost to apply the matrix multiplication for the affine projection is $\sO(\mbox{dim}(\sC)^2)$ \cite{golub2013matrix}.\footnote{The calculations for updating $z^k$ include the computation of $x^k$, and so we ignore costs to compute $x^k$.}   
\end{proof}

\newpage
\section{Intersections of Constraints}

For completeness, we also consider constraints $\sC$ that may be expressed as the intersection of several sets, \ie $\sC = \sC_1\cap \sC_2 \cdots \cap \sC_K$. Let $\overline{\sX}$, $\left<\cdot,\cdot\right>_{\overline{\sX}}$, $Q^{+}$ and $Q^{-}$ be as in  \Cref{sec: Minkowski Sum}. Next define\footnote{Note $\sA$ in  \Cref{sec: Minkowski Sum} is the same as $\sB^{1}$ in (\ref{eq: B-def}), \ie $\sB^1 = \sA$.}
\begin{equation}
\begin{aligned}
    \sB^{1} & \triangleq \sC_1\times \cdots \times \sC_K,
    \\
     \sB^{2} & \triangleq Q^{+}(\sX) = \{\overline{x} \in \overline{\sX}: \ x_1=\cdots=x_K\},
     \\
     \sB &\triangleq \sB^{1}\cap \sB^{2},
    \label{eq: B-def}
\end{aligned}
\end{equation}
 Note   $Q^{-}(\sB) = \sC$. The logic is now the same as before; rephrase Algorithm 1 using $\sB^{i}$ in place of $\sC^{i}$. The projection $P_{\sB^{1}}$ can be computed component-wise via
\begin{equation}
    P_{\sB^{1}}(\overline{x}) =   \left(P_{\sC_1}(\overline{x}_1), \ldots, P_{\sC_K}(\overline{x}_K)\right),
\end{equation}
and $P_{\sB^{2}}(\overline{x})$ has a simple closed form given in the following lemma.

\begin{lemma}
With notation as above, $P_{\sB^{2}}(\overline{x}) = Q^{+}\left(\frac{1}{K}\sum_{k=1}^{K}\overline{x}_k\right)$.
\end{lemma}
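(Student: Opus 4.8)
The plan is to use the fact that $\sB^{2} = Q^{+}(\sX)$ is a closed linear subspace of $\overline{\sX}$ — the ``diagonal'' — so that the projection $P_{\sB^{2}}$ exists, is unique, and is characterized by the usual variational/orthogonality condition for projection onto a subspace. The first step is to parametrize $\sB^{2}$ by $\sX$: every element of $\sB^{2}$ is of the form $Q^{+}(y)$ for a unique $y \in \sX$. Using the definition of the product inner product $\langle\cdot,\cdot\rangle_{\overline{\sX}}$ and its induced norm, the projection problem reduces to the unconstrained minimization
\begin{equation}
    \min_{y \in \sX} \ \|\overline{x} - Q^{+}(y)\|_{\overline{\sX}}^{2} = \min_{y\in\sX} \ \sum_{k=1}^{K} \|\overline{x}_k - y\|^{2}.
\end{equation}

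I would then solve this directly. The objective is a sum of $K$ strongly convex quadratics in $y$, hence strongly convex, so it admits a unique minimizer; setting the gradient to zero gives $\sum_{k=1}^{K}(y - \overline{x}_k) = 0$, i.e. $y = \tfrac{1}{K}\sum_{k=1}^{K}\overline{x}_k$. This already yields $P_{\sB^{2}}(\overline{x}) = Q^{+}\!\left(\tfrac{1}{K}\sum_{k=1}^{K}\overline{x}_k\right)$. Equivalently — and perhaps more cleanly — I would verify the answer by the orthogonality characterization: writing $\bar{x} \triangleq \tfrac{1}{K}\sum_{k=1}^{K}\overline{x}_k$ and $\overline{y}^{\star} \triangleq Q^{+}(\bar{x}) \in \sB^{2}$, for every $Q^{+}(w) \in \sB^{2}$ one computes
\begin{equation}
    \langle \overline{x} - \overline{y}^{\star}, Q^{+}(w)\rangle_{\overline{\sX}} = \sum_{k=1}^{K}\langle \overline{x}_k - \bar{x}, w\rangle = \Big\langle \sum_{k=1}^{K}\overline{x}_k - K\bar{x},\, w\Big\rangle = 0,
\end{equation}
so the residual $\overline{x} - \overline{y}^{\star}$ is orthogonal to $\sB^{2}$; since $\sB^{2}$ is a closed subspace, this identifies $\overline{y}^{\star}$ as the projection.

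The only thing to be careful about is the bookkeeping: confirming that the norm on $\overline{\sX}$ splits as $\sum_k \|\cdot\|^2$ over blocks (immediate from the definition of $\langle\cdot,\cdot\rangle_{\overline{\sX}}$) and that $\sB^{2}$ is a closed linear subspace so that the variational characterization applies. There is no substantive obstacle here — the content of the lemma is simply that projecting onto the diagonal amounts to averaging the blocks — so I expect the proof to be a few lines, with the orthogonality computation being the cleanest route.
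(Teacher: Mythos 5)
Your proposal is correct and follows essentially the same route as the paper's proof: parametrize $\sB^{2}$ by $Q^{+}$, reduce the projection to the unconstrained problem $\min_{y\in\sX}\sum_{k=1}^{K}\|\overline{x}_k - y\|^2$, and solve the first-order optimality condition to obtain the block average. Your additional orthogonality verification is a nice (and valid) cross-check, but it does not constitute a different argument.
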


\begin{proof}
By the definition of a projection and the norm on $\overline{\sX}$,
\begin{subequations} 
\begin{align} 
P_{\sB^{2}}(\overline{x}) 
& \triangleq \argmin_{\overline{z}\in \sB^{2}}\| \overline{z}-\overline{x}\|_{\overline{\sX}}^2 = \argmin_{\overline{z}\in \sB^{2}}\sum_{k=1}^{K}\|  \overline{z}_k-\overline{x}_k\|^2 = Q^{+}(z^{\#}), 
\end{align}
\label{eq: projection-equality}
\end{subequations}
where $z^{\#} = \argmin_{z\in \sX} \sum_{k=1}^{K}\|z - \overline{x}_k\|^2,$
so $z^\#$ satisfies the following optimality condition
\begin{align*}
    0 &= \dd{}{z}\left[ \sum_{k=1}^K \|z-\overline{x}_k\|^2 \right]_{z=z^\#} = \sum_{k=1}^K 2(z^\#-\overline{x}_k) = 2K\left( z^\# - \dfrac{1}{K}\sum_{k=1}^K \overline{x}_k\right).
\end{align*}
This implies
\begin{equation}
    z^\# = \dfrac{1}{K}\sum_{k=1}^K \overline{x}_k.
    \label{eq: z-optimal}
\end{equation}
Together (\ref{eq: projection-equality}) and (\ref{eq: z-optimal}) yield the result, completing the proof.
\end{proof}

For an operator $F\colon\sX\rightarrow\sX$, we define a corresponding product space operator $\overline{F}\colon \overline{\sX}\rightarrow\overline{\sX}$ via
\begin{equation}
    \overline{F}(\overline{x}) \triangleq (F(\overline{x}_1), \ldots, F(\overline{x}_K).
\end{equation}
This definition enables us to show a direct equivalence between a VI in the original space $\sX$ and the product space $\overline{\sX}$. That is, we complete the analysis in the following lemmas by showing the solution set of an appropriate VI in the product space coincides with that of the original VI.

\begin{proposition} 
    If  $F\colon \sX\rightarrow\sX$ is $\alpha$-cocoercive, then $\overline{F}\colon\overline{\sX}\rightarrow\overline{\sX}$ is $\alpha$-cocoercive.
\label{prop: Decoupling Minkowski sum}
\end{proposition}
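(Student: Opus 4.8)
The plan is to exploit the fact that $\overline{F}$ acts block-diagonally across the $K$ factors of $\overline{\sX}$, so that $\alpha$-cocoercivity decouples into $K$ independent instances of the cocoercivity of $F$. First I would fix arbitrary $\overline{x},\overline{y}\in\overline{\sX}$ and expand the defining bilinear form using the product scalar product $\langle\cdot,\cdot\rangle_{\overline{\sX}}$ together with the componentwise definition of $\overline{F}$, obtaining
\[
\langle \overline{F}(\overline{x}) - \overline{F}(\overline{y}), \overline{x} - \overline{y}\rangle_{\overline{\sX}} = \sum_{k=1}^K \langle F(\overline{x}_k) - F(\overline{y}_k), \overline{x}_k - \overline{y}_k \rangle.
\]

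Next I would apply the $\alpha$-cocoercivity hypothesis on $F$ separately to each summand, which is legitimate since $\overline{x}_k,\overline{y}_k\in\sX$, to lower-bound the $k$-th term by $\alpha\|F(\overline{x}_k) - F(\overline{y}_k)\|^2$. Summing these $K$ inequalities and recognizing that $\sum_{k=1}^K \|F(\overline{x}_k) - F(\overline{y}_k)\|^2$ equals $\|\overline{F}(\overline{x}) - \overline{F}(\overline{y})\|_{\overline{\sX}}^2$ by the definition of the product norm then yields the cocoercivity inequality with the same constant $\alpha$.

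Since no step mixes the blocks, there is genuinely no obstacle here. The only contrast worth flagging is with \Cref{lemma: product-F-monotone}: there the map $Q^-$ collapses all $K$ blocks into a single sum, so the cocoercivity constant degrades to $\alpha/K$, whereas the operator $\overline{F}$ considered here preserves the product structure exactly and hence preserves $\alpha$ unchanged. The only care required is the bookkeeping with the product inner product and norm; as $\overline{x},\overline{y}$ were arbitrary, the conclusion follows.
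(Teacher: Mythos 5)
Your proposal is correct and follows essentially the same route as the paper's own proof: expand $\langle \overline{F}(\overline{x})-\overline{F}(\overline{y}), \overline{x}-\overline{y}\rangle_{\overline{\sX}}$ blockwise via the product inner product, apply the $\alpha$-cocoercivity of $F$ to each of the $K$ summands, and resum using the definition of the product norm. Your closing observation contrasting this with \Cref{lemma: product-F-monotone} --- where the map $Q^-$ mixes blocks and the constant degrades to $\alpha/K$ --- is also accurate, though it goes beyond what the paper's proof records.
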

\begin{proof}
    Fix any $\overline{x},\overline{y}\in \overline{\sX}$.
    Then observe
    \begin{subequations}
        \begin{align}
        \left< \overline{F}(\overline{x})-\overline{F}(\overline{y}), \overline{x}-\overline{y}\right>_{\overline{\sX}} &= \sum_{k=1}^K \left< F(\overline{x}_k)-F(\overline{y}_k), \overline{x}_k - \overline{y}_k\right>\\
        &\geq \alpha \sum_{k=1}^K \|F(\overline{x}_k)-F(\overline{y}_k)\|^2\\
        &= \alpha \|\overline{F}(\overline{x})-\overline{F}(\overline{y})\|_{\overline{\sX}}^2.
        \end{align}
        \label{eq: F-product-cocoerive}
    \end{subequations}
    Because (\ref{eq: F-product-cocoerive}) holds for arbitrarily chosen $\overline{x},\overline{y}\in\overline{\sX}$, we conclude $\overline{F}$ is $\alpha$-cocoercive.
\end{proof}

\begin{lemma}\label{lemma:VI_equivalence_2}
    For $\alpha$-cocoercive $F\colon \sX\rightarrow\sX$,
    $ x^\circ \in \mathrm{VI}\left(F,\sC\right)$
    iff
    $Q^+({x}^\circ) \in \mathrm{VI}\left(\overline{F}, \sB\right)$.
\end{lemma}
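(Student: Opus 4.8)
The plan is to show that the product-space constraint set $\sB$ is exactly the image of $\sC$ under the lifting map $Q^+$, and then collapse the product-space variational inequality back to the original one by a one-line computation with the product inner product. This mirrors the argument already used for the Minkowski-sum case in \Cref{lemma: minkowski-VI-equivalence}, with $\sB$ playing the role that $\sA$ played there.

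First I would establish the structural identity $\sB = Q^+(\sC)$. Membership in $\sB^2 = Q^+(\sX)$ forces an element $\overline{x}\in\overline{\sX}$ onto the diagonal, i.e.\ $\overline{x} = Q^+(w)$ for some $w\in\sX$, while membership in $\sB^1 = \sC_1\times\cdots\times\sC_K$ then forces the common value $w$ to satisfy $w\in\sC_k$ for every $k$, hence $w\in\sC = \sC_1\cap\cdots\cap\sC_K$. Conversely, any $Q^+(w)$ with $w\in\sC$ lies in both $\sB^1$ and $\sB^2$. Thus $Q^+$ restricts to a bijection from $\sC$ onto $\sB$; in particular $Q^+(x^\circ)\in\sB$ if and only if $x^\circ\in\sC$, so the feasibility requirements of the two VIs already match.

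Next I would record the elementary identity $\overline{F}(Q^+(x)) = Q^+(F(x))$, immediate from the definitions of $\overline{F}$ and $Q^+$. Then, for an arbitrary test point $\overline{y}\in\sB$ written as $\overline{y} = Q^+(w)$ with $w\in\sC$, the defining product inner product gives
\[
\left< \overline{F}(Q^+(x^\circ)), \overline{y} - Q^+(x^\circ)\right>_{\overline{\sX}} = \sum_{k=1}^K \left< F(x^\circ), w - x^\circ\right> = K\left< F(x^\circ), w - x^\circ\right>.
\]
Since $K>0$, the inequality $\left< \overline{F}(Q^+(x^\circ)), \overline{y} - Q^+(x^\circ)\right>_{\overline{\sX}}\geq 0$ holding for all $\overline{y}\in\sB$ is equivalent to $\left<F(x^\circ), w - x^\circ\right>\geq 0$ holding for all $w\in\sC$, which is precisely the defining inequality of $\mathrm{VI}(F,\sC)$. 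Because every step of this chain is a logical equivalence, both implications of the claim follow at once.

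The argument is essentially routine, so the main points requiring care are merely the correct identification $\sB = Q^+(\sC)$ (which hinges on intersecting the diagonal constraint $\sB^2$ with the block constraint $\sB^1$) and tracking the harmless positive factor $K$ introduced by the product inner product. I would also remark that $\alpha$-cocoercivity of $F$ plays no role in this equivalence itself; it is needed only through \Cref{prop: Decoupling Minkowski sum} to guarantee convergence of the associated fixed-point iteration, so I would not invoke it in the proof.
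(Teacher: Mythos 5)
Your proof is correct and takes essentially the same route as the paper's: both reduce the lifted VI to the original one via the identification $\sB = Q^+(\sC)$ (which the paper states as the bijectivity of $Q^+\colon \sC \rightarrow \sB$) and then expand the product inner product to pick up the harmless factor $K$. Your closing remark is also accurate --- the paper's proof never uses the $\alpha$-cocoercivity hypothesis either, as it enters only through \Cref{prop: Decoupling Minkowski sum} and the convergence result.
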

\begin{proof}
    Fix any $x^\circ \in \mathrm{VI}(F,\sC)$ and set $\overline{x}^\circ = Q^+(x^\circ)$.
    An elementary proof shows $Q^+ :\sC\rightarrow\sB$ is a bijection.
    Together with the fact $x^\circ$ is a VI solution, this implies
    \begin{subequations}
    \begin{align}
       & 0 \leq K\left< F(x^\circ), y-x^\circ\right>, \ \ \mbox{for all $y\in\sC$}
       \nonumber\\
        \ \iff \ 
        & 0 \leq \sum_{k=1}^K \left< F(x^\circ), \overline{y}_k - x^\circ\right>, \ \  \mbox{for all $\overline{y}\in\sB$} \\
        \ \iff \ & 0 \leq \sum_{k=1}^K \left< F(\overline{x}^\circ_k), \overline{y}_k - \overline{x}^\circ_k\right>, \ \ \mbox{for all $\overline{y}\in\sB$}\\
        \ \iff \ & 0 \leq  \left<\overline{F}(\overline{x}^\circ), \overline{y} - \overline{x}^\circ\right>_{\overline{\sX}}, \ \ \mbox{for all $\overline{y}\in\sB$}.
    \end{align}
    \label{eq: VI-product-equivalence}
    \end{subequations}
    By the transitive property, the first and final expressions in (\ref{eq: VI-product-equivalence}) are equivalent, and we are done.
\end{proof}

\subsection{Projections onto  Intersections of Hyperplanes}
\label{app: Projection onto an Affine Hyperplane}
Consider the set $\sC  \triangleq\{ x : Nx=b\} \subseteq \sX$, and note $\sC $ is closed and convex so the projection operator onto $\sC$ is well-defined and given by
\begin{equation}
\begin{aligned}
P_{\sC}(z) 
&=  \argmin_{x\in\sC} \frac{1}{2}\|x-z\|^2
\\
&= \argmin_{x\in\sX} \frac{1}{2}\|x-z\|^2\ \ \mathrm{s.t.}\ \ Nx=b.
\label{eq: projection-hyperplane-intersection-definition}
\end{aligned}
\end{equation}
For completeness we express (and prove) a projection formula for $\sC$ via the following lemma.
\begin{lemma}
    For nonempty $\sC \triangleq \{ x : Nx=b\} $, the projection $P_{\sC}$ is given by
    \begin{equation}
        P_{\sC}(z) = z - N^\dag (Nz-b),
    \end{equation}
    where $N^\dag \triangleq U{\Sigma}^{-1} V^\top$ and $U\Sigma V$ is the compact singular value decomposition of $N$ such that $U$ and $V$ have orthonormal columns and ${\Sigma}$ is invertible.
\end{lemma}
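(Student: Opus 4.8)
The plan is to verify directly that the claimed point $\hat{x} \triangleq z - N^\dag(Nz-b)$ satisfies the two defining properties of the metric projection onto the nonempty, closed, convex (in fact affine) set $\sC$: namely that $\hat{x}\in\sC$, and that the residual $z-\hat{x}$ is orthogonal to the direction space of $\sC$. Since $\sC$ is affine with direction space $\ker N$, the standard variational characterization of the projection onto a closed convex set, $p = P_{\sC}(z) \iff p\in\sC$ and $\langle z-p, x-p\rangle \leq 0$ for all $x\in\sC$, specializes to these two conditions (the inequality becomes an equality because $x-p$ ranges over the whole subspace $\ker N$ and hence takes both signs).

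First I would record the linear-algebraic facts furnished by the compact SVD of $N$. From $N = U\Sigma V^\top$ with $U$, $V$ having orthonormal columns and $\Sigma$ invertible, the pseudoinverse $N^\dag$ obeys the Moore--Penrose identities; in particular $NN^\dag$ is the orthogonal projector onto $\operatorname{range}(N)$, and $\operatorname{range}(N^\dag) = \operatorname{range}(N^\top) = (\ker N)^\perp$. I would also use the nonemptiness hypothesis on $\sC$: there exists $x_0$ with $Nx_0 = b$, so $b = Nx_0 \in \operatorname{range}(N)$.

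Next comes feasibility. Computing $N\hat{x} = Nz - NN^\dag(Nz-b)$, I note that $Nz\in\operatorname{range}(N)$ and $b\in\operatorname{range}(N)$ by the previous step, hence $Nz-b\in\operatorname{range}(N)$, on which $NN^\dag$ acts as the identity. Thus $NN^\dag(Nz-b) = Nz-b$ and $N\hat{x} = b$, giving $\hat{x}\in\sC$. For orthogonality, the residual is $z-\hat{x} = N^\dag(Nz-b)\in\operatorname{range}(N^\dag) = (\ker N)^\perp$; and for any $x\in\sC$ the difference $x-\hat{x}$ lies in $\ker N$ (both are mapped to $b$ by $N$), so $\langle z-\hat{x},\, x-\hat{x}\rangle = 0$. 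The two conditions together certify $\hat{x} = P_{\sC}(z)$.

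The SVD identities are routine, so the one point genuinely requiring care is the consistency argument $b\in\operatorname{range}(N)$: this is exactly where the nonemptiness of $\sC$ enters, and it is what guarantees $NN^\dag$ fixes $Nz-b$ even when $N$ is rank-deficient. I expect this rank-deficient case to be the only real obstacle. For full row rank one could alternatively derive the formula from the KKT system $\hat{x} = z - N^\top\lambda$ with $NN^\top\lambda = Nz-b$ and then identify $N^\top(NN^\top)^{-1}$ with $N^\dag$, but the SVD/orthogonality route has the advantage of handling the general case uniformly without any rank assumption.
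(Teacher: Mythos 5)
Your proof is correct, but it takes a genuinely different route from the paper's. The paper treats $P_{\sC}(z)$ as an equality-constrained quadratic program: it forms the Lagrangian $\sL(x,\lambda) = \tfrac{1}{2}\|x-z\|^2 + \langle \lambda, Nx-b\rangle$, writes down stationarity and primal feasibility, then \emph{exhibits} an explicit multiplier $\lambda^{\#} = (U\Sigma^{-2}U^\top)(Nz-b)$, checks by direct SVD algebra that the resulting $x^{\#} = z - V\Sigma^{-1}U^\top(Nz-b)$ satisfies $Nx^{\#} = UU^\top b$, and finally uses nonemptiness to expand $b$ in the columns of $U$ and conclude $UU^\top b = b$. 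You dispense with multipliers entirely and instead verify the two conditions of the Hilbert-space projection characterization --- feasibility, plus orthogonality of the residual to the direction space $\ker N$ --- using structural Moore--Penrose facts: $NN^\dag = UU^\top$ is the orthogonal projector onto $\mathrm{range}(N)$ (so it fixes $Nz-b$, again by nonemptiness), and $\mathrm{range}(N^\dag) = \mathrm{range}(N^\top) = (\ker N)^\perp$. Both arguments pivot on the same crux, which you correctly isolated: nonemptiness gives $b \in \mathrm{range}(N)$, and this is exactly what rescues the rank-deficient case (it is the paper's $UU^\top b = b$ step in your language). Indeed, the paper's stationarity condition $z - x^{\#} = N^\top\lambda^{\#}$ is your orthogonality condition in disguise, since it places the residual in $\mathrm{range}(N^\top)$. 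What your route buys is that no multiplier must be guessed and the argument is uniform in the rank of $N$; what the paper's route buys is a fully elementary matrix computation (no appeal to the variational characterization of projections) together with an explicit dual certificate. One small point: the lemma as stated defines $N^\dag \triangleq U\Sigma^{-1}V^\top$, which is a typo --- the paper's own computation, like your standard pseudoinverse identities, uses $N^\dag = V\Sigma^{-1}U^\top$.
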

\begin{proof}
 Referring to (\ref{eq: projection-hyperplane-intersection-definition}), we see the associated Lagrangian is given by
\begin{equation}
    \sL(x,\lambda) \triangleq \dfrac{1}{2}\|x-z\|^2 + \left< \lambda, Nx-b\right>.
\end{equation}
The optimizer $x^\# \triangleq P_{\sC}(z)$  satisfies the   optimality condition $0 = \nabla  \sL(x^\#, \lambda^\#)$ for some $\lambda^\#$, which can be expanded as 
\begin{subequations}
\begin{align}
    0 &= \nabla_x \Big[ \sL(x,\lambda)\Big]_{(x,\lambda)=(x^\#,\lambda^\#)}  \nonumber
    \\
    &= x^\# -z  + N^\top \lambda^\#, \label{eq: opt1-Projection}\\
    0& = \nabla_\lambda \Big[ \sL(x,\lambda)\Big]_{(x,\lambda)=(x^\#,\lambda^\#)}  \nonumber \\
    &= N  x^\# - b. \label{eq: opt2-Projection}
\end{align}
\end{subequations}
We claim it suffices to choose
\begin{equation}
    \lambda^\# = (U\Sigma^{-2} U^\top) (Nz-b).
\end{equation}
By (\ref{eq: opt1-Projection}), this choice yields
\begin{subequations}
\begin{align}
    x^\# 
    &= z - N^\top \lambda^\#\\
    &= z - N^\top ( U\Sigma^{-2} U^\top) (Nz-b)\\
    &= z - (V\Sigma U^\top) (U\Sigma^{-2} U^\top)(Nz-b)\\
    &= z - (V \Sigma^{-1} U^\top)(Nz-b) \\
    &= z - N^\dag (Nz-b).
\end{align}
\end{subequations}
To prove this formula for $x^\#$ gives the projection, it suffices to show the remaining condition $Nx^\# =b$ is satisfied. Decomposing $N$ into its singular value decomposition, observe
\begin{subequations}
    \begin{align}
    Nx^\#
    &=  N(z - (V \Sigma^{-1} U^\top)(Nz-b)) \\
    &= Nz - (U\Sigma V^\top)(V\Sigma^{-1}U^\top)(Nz-b) \\
    &= Nz - (U\Sigma V^\top)(V\Sigma^{-1}U^\top)(U\Sigma V^\top z-b) \\
    &= Nz - U \Sigma V^\top z + U U^\top b \\
    &= U U^\top b.
    \end{align}    \label{eq: Nx-UUtop-b}\end{subequations}
The range of $N$ is contained in the subspace spanned by the orthonormal columns of $U$, \ie $\mbox{range}(N) \subseteq \mbox{span}(u^1,\ldots,u^r)$, where $u^i$ is the $i$-th column of $U$ and $r$ is the rank of $N$. 
Because $\sC$ is nonempty, $b \in \mbox{range}(N)$ and it follows that there exists scalars $\alpha_1,\ldots,\alpha_r$ such that 
\begin{equation}
    b = \sum_{i=1}^r \alpha_i u^i.
\end{equation}
Through direct substitution, we deduce
    \begin{equation}
    \begin{aligned}
        U U^\top b 
        &= U U^\top \sum_{i=1}^r \alpha_i u^i = U \left( \sum_{i,j=1}^r \alpha_i \left<u^j,u^i\right>\right) = U \sum_{i=1}^r \alpha_i e^i = \sum_{i=1}^r \alpha_i u^i = b.
    \end{aligned}
    \label{eq: UUtop-b}
\end{equation}
Thus, (\ref{eq: Nx-UUtop-b}) and (\ref{eq: UUtop-b}) together show the final optimality condition is satisfied,   proving the claim.  
\end{proof}

\begin{remark}
    In our traffic routing experiments, we use  the built-in SVD function in Pytorch,   threshold  the tiny singular values to be zero, and invert  the remaining entries.
\end{remark}

\section{Experimental Supplementary Material}
\label{sec:Experiment_Appendix}

\subsection{Matrix Games}
\label{app: Matrix Games}
Our Payoff-Net architecture is based upon the architecture described in \cite{ling2018game}\footnote{and downloaded from \url{https://github.com/lingchunkai/payoff_learning}}, but we make several modifications, which we now describe. First, we update their implementation of a differentiable game solver so as to be compatible with the current Pytorch autograd syntax. Second, we modify their code to enable Payoff-Net to handle matrix games of arbitrary size (not just $3$ dimensional action sets). Finally, we use two, instead of one, fully connected layers with ReLU activation to map context $d$ to payoff matrix $P$, which is then provided to the differentiable game solver. If $a$ is as in Section~\ref{sec:Experiment_RPS} then Payoff-Net has $9 + \frac{3}{2}\left(a^2 - a\right)$ tunable parameters.

The (unconstrained) N-FPN architecture is as described in \cref{alg: N-FPN-Simple} with $F_{\Theta}$ consisting of two fully connected. More specifically, 
\begin{equation*}
    F_{\Theta}(x;d) = x + W_2\left(x + \sigma\left(W_1d\right)\right)
\end{equation*}
where $W_1$ and $W_2$ are matrices of tunable parameters. Each N-FPN has $6a + 4a^2$ tunable parameters. \reedit{The cocoercive N-FPN architecture is also as described in \cref{alg: N-FPN-Simple}, except with 
\begin{equation*}
    F_{\Theta}(x;d) = \alpha x +W_2^{\top}W_2\left(x + \sigma(W_1d)\right) + (W_3 - W_3^{\top})x 
\end{equation*}
where $W_1, W_2,$ and $W_3$ are matrices of tunable parameters and $\alpha$ is the desired cocoercivity parameter. In our experiments we took $\alpha=0.5$. We also used spectral normalization, specifically the PyTorch utility {\tt nn.utility.spectral\_norm} applied to $A$ and $B$ to ensure $F_{\Theta}(\cdot;d)$ is Lipschitz continuous. This variant of N-FPN has $6a + 8a^2$ tunable parameters. }\\ 
For training all three networks we use Adam with a decrease-on-plateaus step-size scheduler. We tuned the initial step-sizes by performing a logarithmic sweep over $\{10^{-5},\ldots, 10^{0}\}$ and found that a starting step-size of $10^{-1}$ works bet for Payoff-Net while a starting step-size of $10^{-3}$ is best for (both versions of) N-FPN. All code is available \href{https://github.com/DanielMckenzie/Nash_FPNs}{online}. 

\subsection{Toy Traffic Routing Model}
\label{app: toy_traffic_routing_model}
We consider the traffic network shown in  \Cref{fig:Braess} (with incidence matrix given in \eqref{eq:Braess_Incidence}) and a single OD pair: $(v_1,v_4,1)$. We use the contextual travel-time functions\footnote{The form of this function is motivated by the well-known Bureau of Public Roads (BPR) function} 
\begin{equation}
    t_e(x_e ; d)
    \triangleq f_e \cdot \left( 1 + \left[ \dfrac{x_e}{c(d)_e}\right]^4\right), 
\end{equation}
where $f = (1, 2, \sqrt{2}, \sqrt{3}, 1)$   and 
\begin{equation}
    c(d) \triangleq \tilde{c}\odot \left(1 + P_{[-\varepsilon,\varepsilon]}(Wd) \right),
\end{equation}
for $\varepsilon = 0.4$ and   $\tilde{c} = (0.4, 0.8, 0.8, 0.6, 0.3)$, and $\odot$ denoting element-wise ({\em i.e} Hadamard) product. The matrix $W$ is constructed by sampling the entries of the first column uniformly and i.i.d. on $(-10,0]$, and sampling the entries of the remaining columns uniformly and i.i.d on $[0,1)$. This form of $W$ implies that $d_1 >0$ decreases the capacity of each road segment, albeit by varying amounts. Thus $d_1$ could be interpreted as, for example, inches of rainfall. We use this interpretation to generate  \Cref{fig: game-toy-diagram,fig:Braess}; taking any $d$ with $d_1$ large  corresponds to a rainy day while if $d_1$ small it can be interpreted as a sunny day. 
We generate training data by sampling $d$ i.i.d and uniformly from $[0,0.25]^5$  and then solving for $x^{\star}_d \in \mathrm{VI}(F,\sC)$ using    \Cref{alg: N-FPN-abstract} with
\begin{equation}
\begin{aligned}
    F(x;d) &\triangleq \left[ t_1(x_1;d), \ldots, t_5(x_5;d)\right]^\top,
    \\
\sC^{1} &\triangleq \{x: Nx = b\}, 
\\
b  &\triangleq [-1, \ 0,\  0,\ 1,\ 0 ]^\top,
\\
\sC^2 &=  \bbR_{\geq 0}^5
\end{aligned}
\end{equation}
 The projection onto $\sC^2$ is given by a component-wise ReLU and the projection onto $\sC^2$ is given in  \Cref{app: Projection onto an Affine Hyperplane}.

 \subsection{Real-World Traffic Routing Model}
 \label{app: real_traffic_routing_model}
 Similarly to  \Cref{app: toy_traffic_routing_model}, we consider a traffic network for the real data described in  \Cref{tab: trafficNetResults}.
 For each dataset, we obtain the OD pairs $b_k$, the free-flow time $f_e$, the incidence matrix $N$, and the capacity values $\tilde{c}$ on each edge from the Transportation Networks website~\cite{traffixnet}. 
 To generate the data, we use the contextual travel-time function 
 \begin{equation}
    t_e(x_e ; d)
    \triangleq f_e \cdot \left( 1 + 0.5 \left[ \dfrac{x_e}{c(d)_e}\right]^4\right), 
\end{equation}
where we contextualize the capacities with 
\begin{equation}
    c(d) \triangleq \tilde{c}\odot \left(1 + P_{[-\varepsilon,\varepsilon]}(Wd) \right).
\end{equation}
Here, we set $\epsilon = 0.8$ for the Anaheim dataset and $\epsilon = 0.3$ for the remaining four datasets.
We choose $\epsilon$ for the Anaheim dataset as we found the resulting actions $x^\star_d$ were too similar for $\epsilon = 0.3$ (making it too easy to train an operator fitting this dataset).
Similarly to the toy traffic problem, the matrix $W$ is constructed by sampling the entries of the first column uniformly and i.i.d. on $(-10,0]$, and sampling the entries of the remaining columns uniformly and i.i.d on $[0,1)$.
Since we have multiple OD pairs, the constraints are given by the Minkowski sum of polyhedral sets. Thus, we generate the 5500 training data pairs $(d, x^\star_d)$ using  \Cref{alg: N-FPN-product}.

\subsection{Network Architecture for Traffic Routing}
\label{app: network_architecture}
We describe the architectures used to generate  \Cref{tab: trafficNetResults}.
We use fully-connected layers to parameterize $F_\theta$. 
We have an opening layer, which maps from context (in our experiments, the context dimension is 10) to a 100-dimensional latent space. 
In the latent space, we use either one or two hidden layers (depending on the dataset) with 100-dimensional inputs and ouputs. 
The last layer maps from the hidden dimension to the action space, \ie, number of edges.
Since the number of edges vary per dataset, the number of tunable parameters also vary. 
Finally, we use a maximum depth of 50 iterations in our N-FPN architecture with a stopping tolerance of $\epsilon = 10^{-4}$.
\subsection{Training Details}
\label{app: training_setup}
As described in  \Cref{subsec: TrafficRouting}, we generate 5000 training samples and 500 testing samples for all datasets. 
For all datasets, we use batch size of $500$ and Adam~\cite{kingma2015adam} with constant learning rates and 200 epochs. The learning rates are chosen to be $5 \times 10^{-5}$ for Berlin-Tiergarten and $10^{-3}$ for the remaining datasets. 
All networks are trained using Google Colaboratory~\cite{bisong2019google}.

\section{Data Provenance}
\label{sec: Data Provenance}
For the Rock, Paper, Scissors experiment, we generated our own data following the experimental set-up described in \cite{ling2018game}. For the toy traffic routing problem, we also our own data, using the same traffic network as \cite{li2020end} but modifying their experiment so as to make road capacities contextual. 
The Sioux Falls, Berlin-Tiergarten and Berlin Friedrichshain and Eastern Massachussetts and datasets are from \cite{leblanc1975efficient,jahn2005system} and \cite{zhang2016price} respectively. The Anaheim dataset was provided by Jeff Ban and Ray Jayakrishnan and was originally hosted at \url{https://www.bgu.ac.il/~bargera/tntp/}. All datasets were downloaded from \cite{traffixnet} and are used under the ``academic use only'' license described therein.

\section{Additional Plots}
\label{app: additional_plots}
See Figures \ref{fig: Sioux_Falls}--\ref{fig: Berlin_Tiergarten}.

\begin{figure*}[t]
    \centering 
    \includegraphics{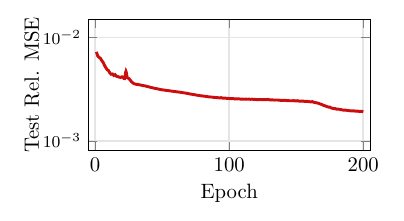} 
    \includegraphics{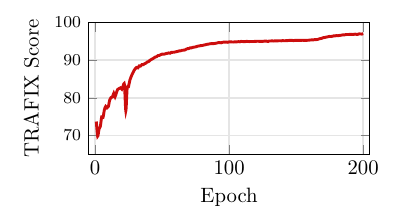} 
    \caption{Plots for N-FPN performance on Sioux Falls testing data. The left plot shows   convergence of expected mean squared error on testing data after each training epoch and the right shows the expected TRAFIX score on testing data after each training epoch.}
    \label{fig: Sioux_Falls}
\end{figure*}

\begin{figure*}[t]
    \centering 
        \includegraphics{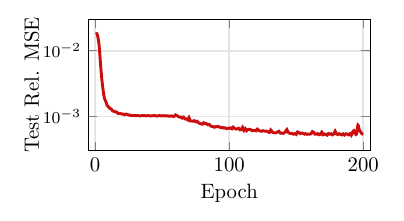} 
        \includegraphics {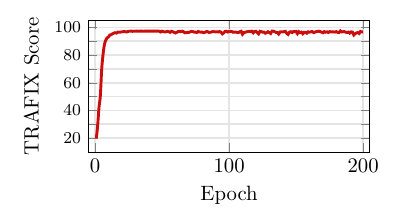} 
    \caption{Plots for N-FPN performance on Berlin Friedrichshain testing data. The left plot shows   convergence of expected relative mean squared error on testing data after each training epoch and the right shows the expected TRAFIX score on testing data after each training epoch.}
    \label{fig: Berlin_Friedrichshain}
\end{figure*}

\begin{figure*}[t]
    \centering 
        \includegraphics{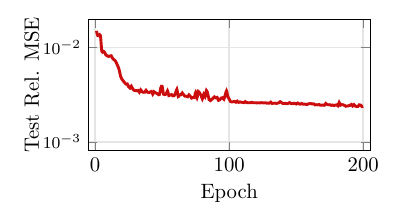} 
        \includegraphics{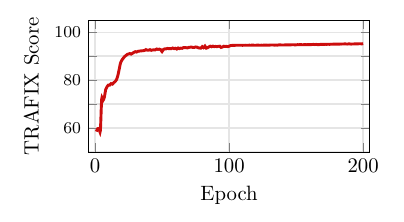} 
    \caption{Plots for N-FPN performance on Anaheim testing data. The left plot shows   convergence of expected relative mean squared error on testing data after each training epoch and the right shows the expected TRAFIX score on testing data after each training epoch.}
    \label{fig: Anaheim}
\end{figure*}

\begin{figure*}[t]
    \centering
    \includegraphics{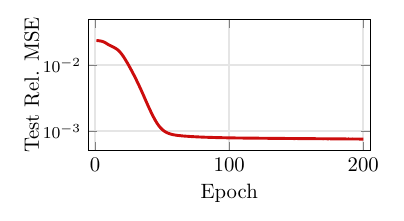}
    \includegraphics{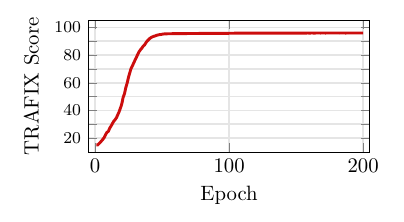}
    \caption{Plots for N-FPN performance on Berlin Tiergarten testing data. The left plot shows   convergence of expected relative mean squared error on testing data after each training epoch and the right shows the expected TRAFIX score on testing data after each training epoch.}
    \label{fig: Berlin_Tiergarten}
\end{figure*}

\section{Testing the importance of splitting}
The splitting architecture, not just JFB, is crucial to the success of N-FPN. To demonstrate this, we compare forward propagation and back propagation using two different architectures: one with splitting and one without splitting. We do this for the Sioux Falls traffic routing experiment. We time forward propagation and back propagation for increasing numbers of origin-destination pairs for a fixed batch size of 50 samples. \\
The method using splitting is N-FPN as presented in our paper. The method without splitting uses a projected-gradient descent-based approach:
    \begin{equation}
        x^{k+1} = P_{\mathcal{C}}(x^k - \alpha F_{\Theta}(x_k;d))
    \end{equation}
    where $P_{\mathcal{C}}(\cdot)$ is computed  using {\tt cvxpy-layers}~\cite{agrawal2019differentiable}. Recall that $\mathcal{C}$ is the Minkowski sum of $k$ polytopes, where $k$ is the number of OD pairs. So implementing $P_{\mathcal{C}}(\cdot)$ takes some care. \\
    In this scheme the JFB pseudogradient is (using the notation of Section 4.2)
    \begin{equation}
        p_{\Theta} = \frac{d\ell}{dx}\frac{dP_{\mathcal{C}}}{dx}\left(I - \frac{dF_{\Theta}}{dx}\right)
    \end{equation}
    where we compute $dP_{\sC}/dx$ using {\tt cvxpy-layers}. Note that $P_{\mathcal{C}}(\cdot)$ does not have a closed form.\\
    The tests are performed on an A100 GPU. As observed, for a \emph{small} batch size of 50 samples, the forward prop takes about \emph{10 minutes} when using an architecture without splitting. \emph{The Sioux Falls dataset has 128 OD pairs} and is the smallest of the traffic dataset. Thus training with an architecture without splitting, \emph{even with JFB} becomes intractable once all 128 OD pairs are included.
    \begin{table}
        \begin{center}
        \begin{tabular}{|c|c|c|c|}
            \hline
            origin-destination pairs & 2 & 5 & 10 
            \\
            \hline
            forward prop (no splitting) &  8.6 &  35.9 & 580.9
            \\
            forward prop (splitting) & 9e-3 & 1e-2 & 1e-2
            \\
            backprop (no splitting) & 1.2 & 1.5 & 1.7 
            \\
            backprop (splitting) & 4e-3 & 4e-3 & 4e-3 
            \\
            \hline
        \end{tabular}
        \end{center}
        \caption{Compute time (in seconds) for the forward propagation and backpropagation of the operator splitting-based network and a network without splitting. The network without constraint decoupling uses {\tt cvxpylayers}~\cite{agrawal2019differentiable} to solve the projection problem.}
    \end{table}

\end{document}